\documentclass{article}

\PassOptionsToPackage{numbers, compress}{natbib}

\usepackage[main, final]{neurips_2026}

\usepackage[utf8]{inputenc}   
\usepackage[T1]{fontenc}      
\usepackage{hyperref}         
\usepackage{url}              
\usepackage{booktabs}         
\usepackage{amsmath,amssymb,amsfonts} 
\usepackage{nicefrac}         
\usepackage{microtype}        
\usepackage{xcolor}           
\usepackage{graphicx}         
\usepackage{algorithm}        
\usepackage{algpseudocode}    
\usepackage{float}
\usepackage[most]{tcolorbox}
\usepackage{array}
\usepackage{siunitx}
\usepackage{wrapfig}
\usepackage{multirow}
\usepackage{amsthm}
\newtheorem{theorem}{Theorem}
\newtheorem{corollary}{Corollary}

\title{Conflict-Aware Logit Adapters for Utility-Preserving Anti-Distillation}

\author{%
  \textbf{Quoc Phong Dao\textsuperscript{1}}\footnotemark[1],
  \textbf{Binh Anh Nguyen\textsuperscript{1,3}}\footnotemark[1],
  \textbf{Linh Ngo Van\textsuperscript{1}}\footnotemark[2],\\
  \textbf{Thanh-Toan Do\textsuperscript{2}},
  \textbf{Mehrtash Harandi\textsuperscript{2}},
  \textbf{Trung Le\textsuperscript{2}}\footnotemark[2],
  \bigskip \\
\textsuperscript{1}Hanoi University of Science and Technology, 
\textsuperscript{2}Monash University,\\
\textsuperscript{3}Center for AI Research (CAIR), VinUniversity
}

\begin{document}
\renewcommand{\thefootnote}{\fnsymbol{footnote}}
\maketitle

\footnotetext[1]{Equal contribution}
\footnotetext[2]{Corresponding authors: Linh Ngo Van (\href{mailto:linhnv@soict.hust.edu.vn}{linhnv@soict.hust.edu.vn}) and Trung Le (\href{mailto:trunglm@monash.edu}{trunglm@monash.edu})}
\renewcommand*{\thefootnote}{\arabic{footnote}}

\begin{abstract}

While reasoning traces improve model capability, they also create a pathway for extracting supervision signals for downstream distillation. Initial efforts to mitigate this rely on decoding-time interventions, but they incur high inference overhead and often harm the teacher model's natural fluency. 
We propose Conflict-Aware Logit Adapters (CALA), a lightweight training-time defense. CALA attaches a small residual adapter to a frozen teacher model and trains only this adapter. A proxy student identifies positions where imitation is most sensitive, and the adapter perturbs the output distribution there within high-probability regions. A conflict-aware optimization balances competing anti-distillation and utility objectives, with regularization preserving the teacher’s performance and fluency.
At inference, CALA incurs modest overhead and requires no auxiliary model. Experiments on reasoning benchmarks show that CALA substantially reduces successful student imitation while maintaining near-original teacher performance and generation quality, offering a practical and efficient alternative to decoding-time approaches.
\end{abstract}

\section{Introduction}
\label{sec:introduction}
Large language models increasingly produce reasoning traces alongside final answers, as chain-of-thought prompting, reasoning-path sampling, and process supervision improve multi-step reasoning~\citep{wei2022chain,wang2023selfconsistency,lightman2024lets}. These traces also reveal intermediate decompositions and solution patterns that can serve as high-information training signal for smaller student models, with evidence spanning rationale-based distillation and broader instruction-tuning or imitation pipelines built from synthetic frontier-model outputs~\citep{hsieh2023distilling,yu2024metamath,wang2023selfconsistency,gudibande2023false}. This creates a security tension: reasoning-rich outputs improve usability, but may also facilitate model replication.

Although prior work has shown that black-box language models can leak substantial information through their outputs, including memorized data and signals useful for model extraction~\citep{carlini2021extracting,carlini2024stealing}, existing defenses are not well aligned with our setting. Output-privacy work primarily focuses on preventing memorization or training-data extraction, while data-poisoning approaches assume control over the learner’s training set~\citep{huang2020metapoison, geiping2020witches}. These assumptions do not match our deployment scenario, where the teacher is fixed, exposed through black-box access, and must continue producing useful outputs for end users.

To study this gap concretely, we focus on a specific threat model: \textit{an attacker collects reasoning traces emitted by a deployed teacher model and uses them as supervision to train a smaller student via standard next-token prediction~\citep{gudibande2023false}}. The attacker has black-box access to the teacher and observes full input-trace pairs, but has no access to model parameters or internal states~\citep{carlini2024stealing}. The defender, in contrast, controls only the distribution of outputs produced by the teacher, and must preserve their utility for end users while limiting their effectiveness as training data for downstream imitation. Under this setting, the closest prior work is Antidistillation Sampling (ADS)~\citep{savani2025antidistillation}, which perturbs the teacher's next-token distribution during decoding to reduce the value of released traces for downstream distillation. However, because ADS applies protection online, it requires proxy-model computation during generation and can introduce decoding overhead or surface artifacts. These limitations motivate a training-time alternative that replaces online perturbations with learned protective behavior in the teacher.

We propose \textbf{Conflict-Aware Logit Adapters (CALA)}, a training-time approach for limiting trace-based model imitation. At a high level, we attach a lightweight residual logit adapter to a frozen teacher and train only this module, amortizing protection into the model instead of imposing it through an online decoding rule. At deployment, generation follows the standard decoding pipeline, and the learned residual can be scaled by a single strength parameter without running an ADS-style proxy model at each token. Concretely, during training, a proxy student takes a single, virtual, non-committed gradient-ascent step on its next-token loss to expose positions where imitation is locally fragile. The adapter is then optimized to shift probability mass on restricted high-probability supports so that released traces become less useful for distillation, while utility-preserving regularizers and a conflict-aware update keep the distribution close to the original teacher and reduce surface artifacts.

\textbf{Contributions.} We make three contributions.
\textit{First}, we formulate Conflict-Aware Logit Adapters (CALA), a training-time defense for reasoning traces that targets the same broad threat as ADS while shifting the intervention from decoding to model training. This avoids online proxy-model computation at inference time and remains compatible with standard decoding choices.
\textit{Second}, we develop a practical proxy-guided training procedure in which a proxy student takes a virtual, non-committed ascent step to provide an anti-distillation signal, and a residual logit adapter is updated with a conflict-aware rule that balances anti-distillation pressure against utility preservation.
\textit{Third}, we empirically evaluate CALA by jointly measuring teacher utility, downstream student imitation, and surface fluency across reasoning benchmarks and evaluation-time operating points.

\section{Related Work}
\label{sec:related-work}

\textbf{Reasoning traces and distillation.}
Intermediate reasoning traces are now common in language-model interfaces. Chain-of-thought prompting, zero-shot reasoning prompts, self-consistency, and process supervision show that intermediate reasoning can improve performance on difficult multi-step tasks~\citep{wei2022chain,kojima2022zeroshot,wang2023selfconsistency,lightman2024lets}. The same traces can also serve as supervision: smaller models can inherit nontrivial reasoning behavior from teacher-generated rationales, step-by-step solutions, and synthetic instruction-following data~\citep{hsieh2023distilling,shridhar2023distilling,yu2024metamath,gu2024minillm}. Our work is motivated by this dual use: we aim to preserve useful reasoning traces while reducing their effectiveness as supervision for downstream imitation.

A complementary line improves transfer across tokenizers or architectures via cross-chain-of-thought and optimal-transport alignment, multi-cost Wasserstein objectives, dual-space sequence alignment, span-level representation alignment, and multi-granular layer trajectories~\citep{le2026discord,vuong2026mcwkd,vu2026dwakd,dao2026sra,chi2026mta}. Related approaches use teacher-guided token weighting for self-play and distillation~\citep{le2025swift} or transfer preferences across tokenizers~\citep{nguyen2026ctpd}. While these methods strengthen distillation rather than defend against it, they motivate evaluating defenses against diverse distillation strategies.

\textbf{Output leakage and defensive perspectives.}
A broader security literature has shown that model outputs can reveal valuable information about the underlying model or its training data. Prior work studies extraction of memorized training examples and reconstruction of model behavior from black-box access~\citep{carlini2021extracting,nasr2025scalable,carlini2024stealing}. Data-poisoning work studies a related but different setting, where carefully chosen upstream data induces predictable downstream behavior~\citep{huang2020metapoison,geiping2020witches,goldblum2022dataset}; similar risks have also been studied for preference data~\citep{rando2024universal}. In our setting, the defender does not control the attacker's training pipeline or dataset construction. Instead, the defender controls only the traces emitted by a deployed teacher and must preserve their utility for legitimate users.

\textbf{Protecting against distillation.}
Recent work has begun to study defenses that reduce the value of model outputs as supervision for unauthorized imitation. The most closely related work is Antidistillation Sampling (ADS)~\citep{savani2025antidistillation}, which targets the same utility-distillability tension for released reasoning traces. DOGe~\citep{li2025doge} is a training-time alternative that fine-tunes the teacher's output layer using an adversarial objective. More broadly, this line connects to work on making teachers harder to imitate~\citep{ma2021undistillable}, trace-rewriting defenses for unauthorized distillation~\citep{ma2026protecting}, watermarking methods for marking or protecting generated text~\citep{kirchenbauer2023watermark,gu2023learnability,sander2024watermarking,zhao2023protecting}, and decoding-time control methods that steer generation with auxiliary objectives~\citep{mudgal2023controlled,li2023contrastive,ji2023language,qin2022cold}. Our goal differs from watermarking-based attribution or detection: we aim to reduce the usefulness of released reasoning traces as distillation supervision.

\section{Motivation and Problem Formulation}
\label{sec:background_formulation}

\textbf{Trace-based distillation.}
We consider a deployed reasoning model that returns full reasoning traces to users. These traces are useful because they expose intermediate steps, but the same structure also makes them high-quality supervision for black-box imitation. Let $T_\Theta$ be a frozen teacher language model, and let $p_t=p(\cdot\mid x_{\leq t};\Theta)$ denote its next-token distribution over vocabulary $\mathcal{V}$. An adversary queries the teacher, collects input-trace pairs, and trains a smaller student $S_\phi$ with the standard next-token objective over the generated tokens,
\begin{equation}
\mathcal{L}_{\mathrm{CE}}(\phi)
= -\frac{1}{N}\sum_{t=1}^{N}\log p_\phi(x_{t+1}\mid x_{\leq t}),
\label{eq:student_ce}
\end{equation}
where $N$ is the number of next-token prediction terms in the trace. The attacker observes only text, not teacher parameters, hidden states, or logits. The defender therefore controls only the distribution of traces emitted by the teacher.

\begin{wrapfigure}{r}{0.52\textwidth}
    \vspace{-7pt}
    \centering
    \includegraphics[width=0.5\textwidth]{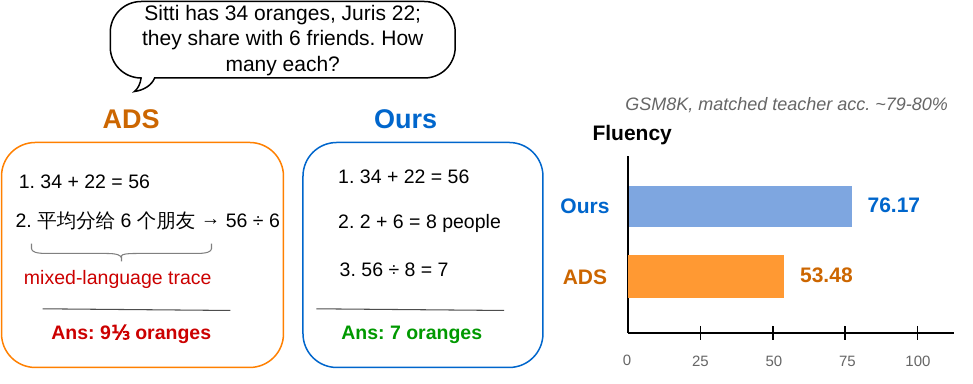}
    \caption{
    Representative GSM8K example: ADS can introduce visible surface artifacts such as mixed-language reasoning and malformed fragments, while CALA produces a cleaner trace.
    }
    \vspace{-8pt}
    \label{fig:intro_teaser}
\end{wrapfigure}
\textbf{Limitations of existing work.}
Antidistillation Sampling (ADS)~\citep{savani2025antidistillation} is a pioneering decoding-time defense for protecting reasoning traces against unauthorized distillation. It perturbs the teacher’s next-token distribution during generation by adding a proxy-student-derived anti-distillation penalty to the teacher logits, encouraging tokens that remain plausible under the original teacher while reducing the usefulness of the resulting traces for student training. However, our empirical evaluation reveals several practical limitations: ADS also incurs \textit{additional inference-time cost} because it requires proxy-model computation during generation. Moreover, our empirical evaluation shows that ADS can introduce surface-level artifacts in reasoning traces, such as switching to another language despite an English question, as shown in Figure~\ref{fig:intro_teaser}. We also observe cases where the final answer remains correct while the chain-of-thought becomes incoherent, containing meaningless tokens, unnatural fragments, or mixed-language reasoning. We refer to this behavior as \textbf{anti-distillation hacking}: the model preserves task accuracy by producing the correct final answer, but degrades the intermediate reasoning in ways that make the trace harder to distill (\textit{more examples can be found in Appendix \ref{app:samples}}). This suggests that evaluating only teacher accuracy is insufficient; a reliable defense must also assess the fluency and usability of the generated reasoning traces.

\textbf{Design goal.}
A defense against trace distillation should not simply degrade generation quality. If the teacher produces unnatural or low-utility traces, imitation becomes harder, but the model also becomes less useful to legitimate users. We seek a defended teacher that preserves fluent reasoning while reducing the value of its traces as distillation data. CALA moves the intervention to training time: we learn a lightweight residual adapter on top of a frozen teacher; at inference, its effect can be controlled by a single strength parameter without running a proxy model during decoding.

\section{Method}
\label{sec:method}
CALA aims to reduce the usefulness of released reasoning traces for next-token imitation while preserving utility and surface naturalness. As illustrated in Figure~\ref{fig:method_overview}, it trains a residual adapter through three components: a temporary proxy-student ascent step that exposes imitation-sensitive positions, support-restricted losses on high-probability tokens, and utility regularizers that anchor the adapted distribution to teacher-derived targets. A conflict-aware update then balances anti-distillation and utility gradients so that utility preservation does not erase the protective signal.

\begin{figure}[t]
    \centering
    \includegraphics[width=0.90\linewidth]{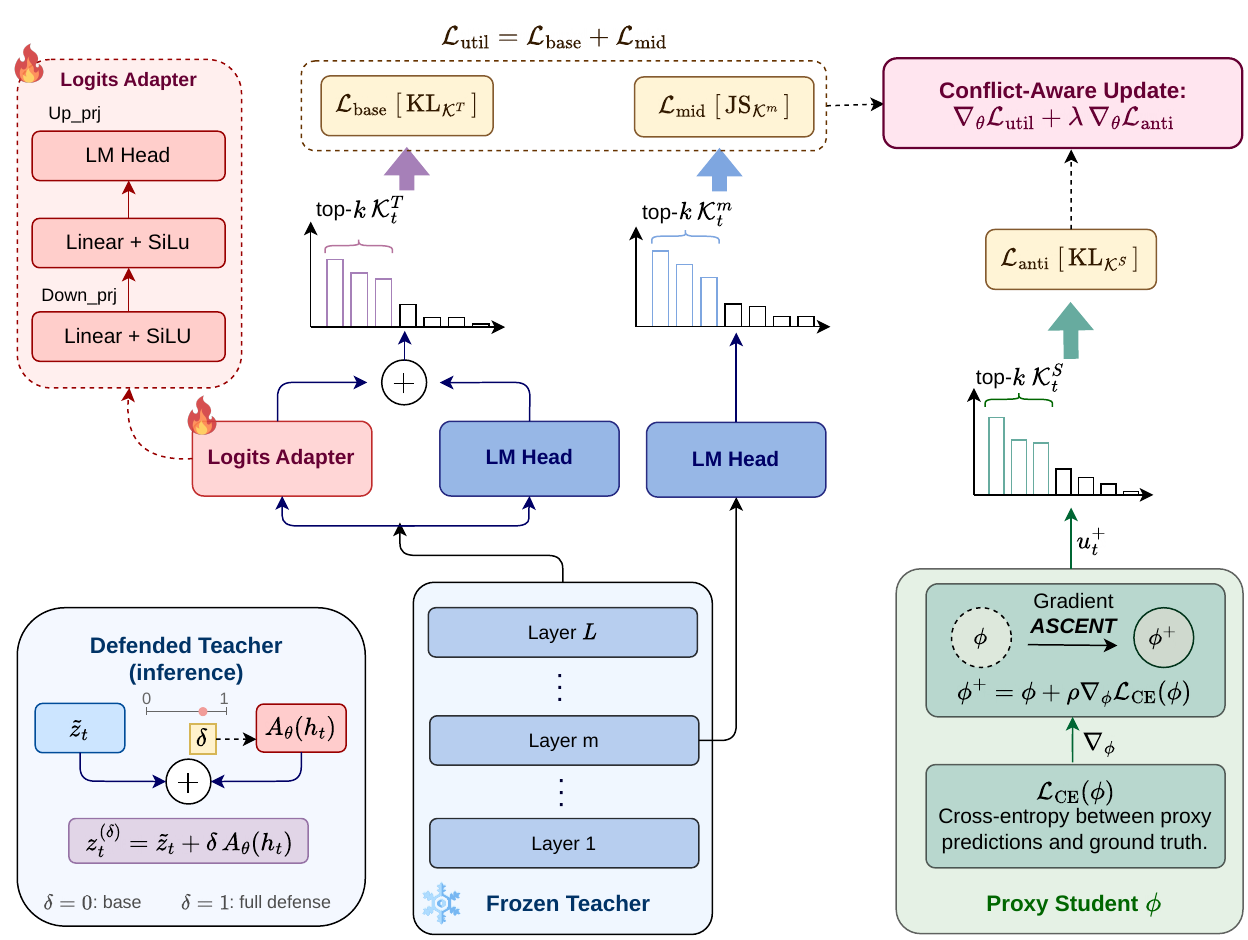}
    \caption{
    Overview of Conflict-Aware Logit Adapters (CALA). A trainable residual adapter modifies the frozen teacher's logits $z_t$, while proxy-guided anti-distillation and teacher-derived utility anchors jointly shape the adapted distribution.}
    \label{fig:method_overview}
\end{figure}

\subsection{Training-time defended teacher}

\textbf{Residual adapter.} We implement the defense as a small residual modification to a frozen teacher,
rather than fine-tuning the teacher itself. Let $h_t$ denote the final hidden state of the
frozen teacher at position $t$, and let $\tilde z_t = W_{\mathrm{lm}} h_t$ be the
logits produced by the original language-modeling head. We introduce a
lightweight residual adapter $A_\theta$ that maps $h_t$ to a vocabulary-sized
logit correction. The defended training logits are
\begin{equation}
z_t = \tilde z_t + A_\theta(h_t),
\qquad
q_t = \mathrm{softmax}(z_t).
\end{equation}
The teacher backbone and the language-modeling head remain frozen throughout
training; only the adapter parameters $\theta$ are updated. This design keeps the
defense localized and avoids changing the underlying teacher representation.

\textbf{Teacher anchors.}
Since the adapter directly modifies the teacher's next-token distribution, it must be constrained to preserve trace quality. Without such constraints, anti-distillation pressure can move probability mass toward locally confusing but linguistically poor continuations; in preliminary experiments, this made traces harder to imitate but sometimes introduced repetition or unnatural fragments. This resembles the unreliable-tail problem in neural text generation, where low-probability tokens degrade coherence~\citep{holtzman2019curious}. We therefore restrict the logits to top-$k$ supports and renormalize the resulting distributions before computing divergences, and anchor the adapted distribution to the base final-layer prediction and an intermediate-layer projection of the backbone.

\textit{The first anchor} preserves the final-layer teacher distribution. Let $\mathcal{K}_t^{T}$ be the top-$k$ tokens under the base logits $\tilde z_t$, and let $\Pi_{\mathcal{K}}(\cdot)$ denote the distribution obtained by restricting logits to support $\mathcal{K}$ and renormalizing. We penalize deviations from the original teacher on this support:
\begin{equation}
\mathcal{L}_{\mathrm{base}}
=
\frac{1}{N}
\sum_t
\mathrm{KL}\!\left(
\Pi_{\mathcal{K}_t^T}(\tilde z_t)
\,\|\,
\Pi_{\mathcal{K}_t^T}(z_t)
\right).
\label{eq:base_loss}
\end{equation}
We use the directional KL divergence because $\mathcal{L}_{\mathrm{base}}$ is a preservation anchor: the frozen final-layer distribution defines which high-probability continuations should remain stable under the adapter. This term discourages the adapter from suppressing tokens that the base teacher assigns high probability, thereby preserving task utility and local fluency.

\textit{The second anchor} is an intermediate-layer consistency regularizer. This term is inspired by the layer-contrastive view in ~\citep{chuang2023dola}: different transformer depths expose different predictive views of the next token. The final layer is the teacher's mature prediction and is usually where task-specific and factual information is most strongly expressed. In contrast, an intermediate layer is closer to the input-side computation and often reflects a smoother, less specialized continuation preference: it may already support fluent local tokens before the final layers sharpen the distribution toward more fact-specific choices. We therefore use the intermediate projection to keep the generated trace natural. It guides the adapter to make anti-distillation changes without moving too far away from the teacher's own fluent continuations. 

Let $h_t^{(m)}$ be a fixed intermediate hidden state, and let $\tilde{z}_t^{(m)} = W_{\mathrm{lm}}h_t^{(m)}$ denote its projection through the frozen LM head. On the top-$k$ support $\mathcal{K}_t^m$ selected from $\tilde{z}_t^{(m)}$, we add
\begin{equation}
\mathcal{L}_{\mathrm{mid}}
=
\frac{1}{N}
\sum_t
\mathrm{JS}\!\left(
\Pi_{\mathcal{K}_t^m}(z_t),
\Pi_{\mathcal{K}_t^m}(\tilde z_t^{(m)})
\right).
\label{eq:mid_loss}
\end{equation}

Here, $\mathrm{JS}$ denotes the Jensen-Shannon divergence, a symmetric distance between two normalized token distributions. 
We use a symmetric divergence because the intermediate projection is not a target to imitate exactly; rather, it provides a second teacher-internal view that the adapted final prediction should remain compatible with.
This helps preserve fluency under anti-distillation pressure; empirically, removing $\mathcal{L}_{\mathrm{mid}}$ lowers the fluency curve in Figure~\ref{fig:gsm8k_ablation}(b).
Intuitively, $\mathcal{L}_{\mathrm{mid}}$ pulls the defended distribution toward the intermediate layer's smoother continuation preferences, while $\mathcal{L}_{\mathrm{base}}$ keeps it anchored to the final teacher prediction. Together, the complete utility objective is
\begin{equation}
\mathcal{L}_{\mathrm{util}}
=
\mathcal{L}_{\mathrm{base}}
+
\mathcal{L}_{\mathrm{mid}}.
\label{eq:util_loss}
\end{equation}

\subsection{Proxy-guided anti-distillation}
\label{sec:inner_anti}

The defended-teacher objective above requires a training signal indicating which
parts of a reasoning trace are most useful for imitation. Because the actual downstream student is unknown, we introduce a proxy student $S_{\phi}$ only during adapter training. The proxy is not meant to approximate the attacker exactly; instead, it acts as a local probe for identifying next-token positions where imitation is fragile.

For the current batch of teacher traces, we evaluate the proxy student using the
same next-token imitation loss $\mathcal{L}_{\mathrm{CE}}(\phi)$ defined above.
Rather than taking a descent step that improves the proxy on these traces, we
consider the opposite local problem. For a small radius $\rho$, define the ideal
loss-increased proxy as
\[
\phi^\star
\in
\mathrm{argmax}_{\phi':\,\|\phi'-\phi\|_2\le\rho}
\mathcal{L}_{\mathrm{CE}}(\phi').
\]
Since solving this constrained maximization exactly is expensive, we approximate
it by the first-order maximizer within the $\ell_2$ ball and define the virtual
proxy
\begin{equation}
\phi^+
=
\phi
+
\rho
\frac{
\nabla_\phi \mathcal{L}_{\mathrm{CE}}(\phi)
}{
\left\|
\nabla_\phi \mathcal{L}_{\mathrm{CE}}(\phi)
\right\|_2
},
\label{eq:virtual_student}
\end{equation}
where $\rho$ is the local perturbation radius. The parameter $\phi^+$ is not committed to the proxy
model; it is used only as a stop-gradient probe for the outer adapter update.

Given the virtual proxy, we evaluate it on the same teacher traces. Since
$\phi^+$ approximates a local maximizer of $\mathcal{L}_{\mathrm{CE}}$, it
defines a nearby proxy state under which the current traces are harder to
imitate. Let $u^{+}_t=S_{\phi^+}(x_{\le t})$ denote the next-token logits of this locally loss-maximized proxy. 

Using the same support-restriction principle as above, we compute the anti-distillation loss only on a proxy-sensitive top-$k$ support. Let $\mathcal{K}_t^{S}$ be the top-$k$ tokens under $u_t^+$. We define
\begin{equation}
\mathcal{L}_{\mathrm{anti}}
=
\frac{
\sum_t w_t^{S}\,
\mathrm{KL}\!\left(
\Pi_{\mathcal{K}_t^S}(u_t^+)
\,\|\, 
\Pi_{\mathcal{K}_t^S}(z_t)
\right)
}{
\sum_t w_t^{S}
}, \quad \text{with} \quad w_t^{S} = \frac{\mathrm{Entropy}\left(\operatorname{softmax}(u_t^+)\right)}{\log |V|}
\label{eq:anti_loss}
\end{equation}

where $w_t^S$ is an entropy-based weight computed from the virtual proxy distribution. Because $u_t^+$ comes from a nearby proxy state that locally increases the imitation loss, $\Pi_{\mathcal{K}_t^S}(u_t^+)$ provides a local anti-distillation target that is intentionally biased away from the ground-truth next-token imitation direction. Minimizing this KL moves the defended logits toward
proxy-sensitive alternatives on the restricted support, concentrating pressure on positions where imitation is unstable. Higher-entropy positions receive larger weights because the proxy is less confident there, leaving more room for the defended teacher to alter the imitation signal; easy or deterministic continuations are correspondingly
downweighted.

This proxy-guided loss can be viewed as a training-time analogue of ADS. Whereas ADS injects a proxy-derived perturbation during decoding, we instead train the adapter so that a related anti-distillation direction is already encoded in the teacher's logits before generation begins.

\subsection{Conflict-aware optimization}
\label{sec:conflict}

The utility and anti-distillation objectives have training-dependent alignment, making a fixed loss weight brittle. 
When their gradients are aligned, the utility update may already preserve part of the anti-distillation direction; when they conflict, the utility update can erase it. 
We therefore combine the two gradients with an adaptive coefficient based on their alignment:
\begin{equation}
  \nabla_\theta \mathcal{L}
  \;=\;
  \nabla_\theta \mathcal{L}_{\mathrm{util}}
  + \lambda\, \nabla_\theta \mathcal{L}_{\mathrm{anti}},
  \label{eq:final_grad}
\end{equation}
where the adaptive coefficient $\lambda$ is computed as:
\begin{equation}
  \lambda
  \;=\;
  \max\!\left(
    0,\;
    \kappa
    - \frac{\left\langle \nabla_\theta \mathcal{L}_{\mathrm{util}}, \nabla_\theta \mathcal{L}_{\mathrm{anti}} \right\rangle}
           {\left\lVert \nabla_\theta \mathcal{L}_{\mathrm{anti}} \right\rVert_2^2 + \varepsilon}
  \right).
  \label{eq:adaptive_lambda}
\end{equation}
Here, $\kappa>0$ is a margin hyperparameter, and $\lambda$ is detached from the computation graph.
This rule can be interpreted as maintaining a minimum projection of the final update onto the anti-distillation direction. The quantity 
$\frac{\left\langle \nabla_\theta \mathcal{L}_{\mathrm{util}}, \nabla_\theta \mathcal{L}_{\mathrm{anti}} \right\rangle}
           {\left\lVert \nabla_\theta \mathcal{L}_{\mathrm{anti}} \right\rVert_2^2 + \varepsilon}
$ 
measures how much of the anti-distillation direction is already present in the utility gradient. When the gradients are aligned, this projection is large, so the additional anti-distillation weight is reduced. When the gradients are orthogonal or opposed, the projection is small or negative, so $\lambda$ increases and adds back anti-distillation pressure. This prevents the utility regularizers from washing out the protective update while still allowing them to dominate when they already support the same direction.
We provide a first-order constrained-projection interpretation of this update in Appendix~\ref{app:conflict_aware_analysis}.

\subsection{Generation-time strength control}
\label{sec:generation_control}
During training, the objectives are defined on the full adapted distribution $q_t$. At generation time, we expose the learned residual with a scalar $\delta\in[0,1]$, which controls the strength of the defense by interpolating between the original and adapted logits:
\begin{equation}
\label{eq:delta-interpolation}
z_t^{(\delta)}
=
(1-\delta)\tilde z_t + \delta  z_t
=
\tilde z_t + \delta A_\theta(h_t),
\qquad
q_t^{(\delta)}
=
\mathrm{softmax}\!\left(z_t^{(\delta)}\right).
\end{equation}
Thus, $\delta=0$ recovers the original teacher exactly, while $\delta=1$ applies the learned defense at full scale. Intermediate values provide a practical way to navigate the utility-distillability trade-off without retraining the
adapter. After interpolation, generation uses standard temperature and top-$k$ sampling.

The full procedure is shown in Algorithm~\ref{alg:procedure}.

\begin{algorithm}[htbp]
\caption{Conflict-Aware Logit Adapter}
\label{alg:procedure}
\textbf{Input:} Frozen teacher $T_\Theta$, adapter $A_\theta$, proxy $S_\phi$,
dataset $\mathcal{D}$, support size $k$, conflict margin $\kappa$, inference strength $\delta\in[0,1]$

\textbf{Training.} For each minibatch $\mathcal{B}$:
\begin{enumerate}
  \item Compute adapted logits $z_t \leftarrow \tilde{z}_t + A_\theta(h_t)$
        and accumulate proxy gradients $\nabla_\phi \mathcal{L}_{\mathrm{CE}}(\phi)$.
 \item Take a virtual proxy ascent step
        $\phi^+=\phi+\rho\frac{\nabla_\phi \mathcal{L}_{\mathrm{CE}}(\phi)}{\left\|\nabla_\phi \mathcal{L}_{\mathrm{CE}}(\phi)\right\|_2},$
        and compute the anti-distillation loss $\mathcal{L}_{\mathrm{anti}} $ \eqref{eq:anti_loss}
        on the proxy-sensitive support $\mathcal{K}^S(\phi^+)$.
\item Compute the utility loss $\mathcal{L}_{\mathrm{util}} = \mathcal{L}_{\mathrm{base}}(\mathcal{K}^T)
        + \mathcal{L}_{\mathrm{mid}}(\mathcal{K}^m)$ \eqref{eq:util_loss},
        then update $\theta$ via
        \[
          \nabla_\theta \mathcal{L} = \nabla_\theta \mathcal{L}_{\mathrm{util}}
          + \lambda\, \nabla_\theta \mathcal{L}_{\mathrm{anti}},
        \]
        where $\lambda$ in \eqref{eq:adaptive_lambda} down-weights $\mathcal{L}_{\mathrm{anti}}$
        when the two gradients are aligned and amplifies it when they conflict.
        Discard $\phi^+$.
\end{enumerate}

\textbf{Inference.} At each step $t$, apply the learned residual with strength $\delta$
and sample the next token:
\[
  x_{t+1} \sim \mathrm{softmax}\!\left(\tilde{z}_t + \delta\, A_\theta(h_t)\right).
\]
\textbf{Output:} Generated sequence with tunable distillation resistance.
\end{algorithm}

\section{Experiments}
\label{sec:experiments}
We evaluate whether a training-time residual adapter improves the utility-distillability trade-off over decoding-time antidistillation, while avoiding deployment-time proxy-student computation. For each operating point, we report two metrics: \textit{defended teacher accuracy} (user-facing utility) and \textit{downstream student accuracy} (residual distillation signal). At a fixed teacher accuracy, lower student accuracy indicates stronger protection.

\subsection{Experimental Setup}
\label{sec:experimental_setup}

\textbf{Models and Training Data.}
We use DeepSeek-R1-Distill-Qwen-7B~\cite{deepseekai2025deepseekr1} as the defended teacher.
We use Qwen2.5-1.5B~\cite{yang2025qwen25} as the training-time proxy student and Llama-3.2-3B~\cite{meta2024llama32modelcard} as the downstream student for trace-distillation evaluation.

We train the adapter on 6{,}800 MetaMathQA~\cite{yu2024metamath} samples with teacher-generated responses.

\textbf{Baselines.}
Our main trade-off curves compare against two baselines. \textbf{Antidistillation Sampling} (ADS)~\citep{savani2025antidistillation} applies a proxy-student-derived defense at decoding time.
 And \textbf{Temperature Sampling} varies only the teacher sampling temperature $\tau$ with no adapter and no proxy-derived antidistillation penalty, providing a simple control for ordinary sampling noise.

\textbf{Evaluation protocol.} Our setup closely follows the baseline protocol outlined by ADS~\citep{savani2025antidistillation}. We evaluate on three benchmarks: GSM8K~\cite{cobbe2021training}, MATH~\cite{hendrycks2021measuring}, and MMLU~\cite{hendrycks2020measuring} (the adapter-training dataset and all evaluation benchmarks are provided under the MIT License). For each benchmark, traces generated by the defended teacher are used to train a downstream student, ensuring a strictly consistent distillation procedure across all evaluated methods. 

For each method, we evaluate four operating points across different hyperparameter configurations, visualizing the results as trade-off curves. Exact numerical values and point-specific configurations are detailed in Appendix~\ref{app:additional_experiments}. Implementation details and hyperparameters for training and evaluation are provided in Appendix~\ref{app:hyperparameters}.

\subsection{Results \& Discussion}
\label{sec:results}

\begin{figure}[t]
    \centering

    \begin{minipage}[t]{0.75\linewidth}
        \centering
        \includegraphics[width=\linewidth]{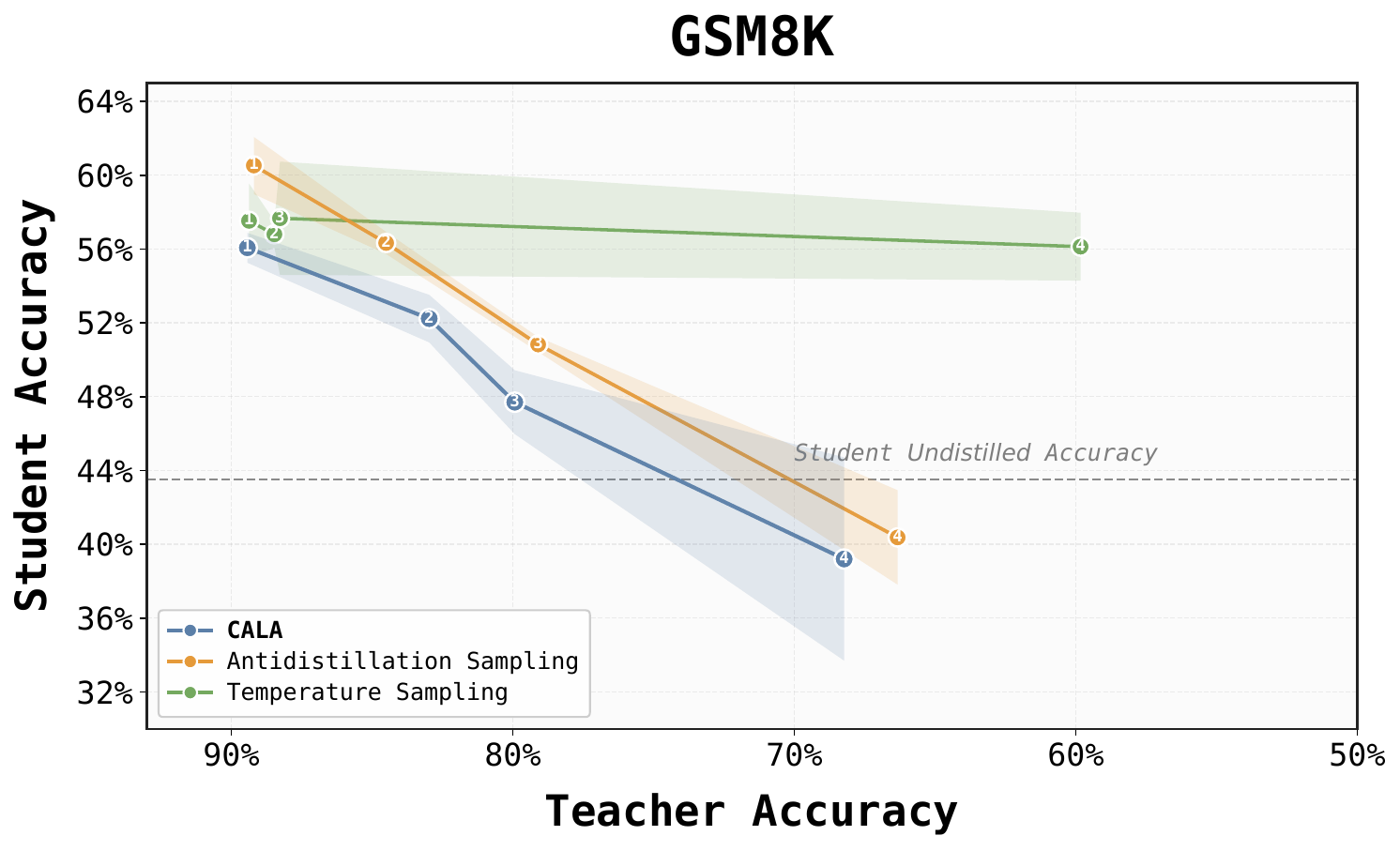}
        \vspace{-0.5em}
        \centerline{\small (a) GSM8K}
    \end{minipage}

    \vspace{0.8em}

    \begin{minipage}[t]{0.49\linewidth}
        \centering
        \includegraphics[width=\linewidth]{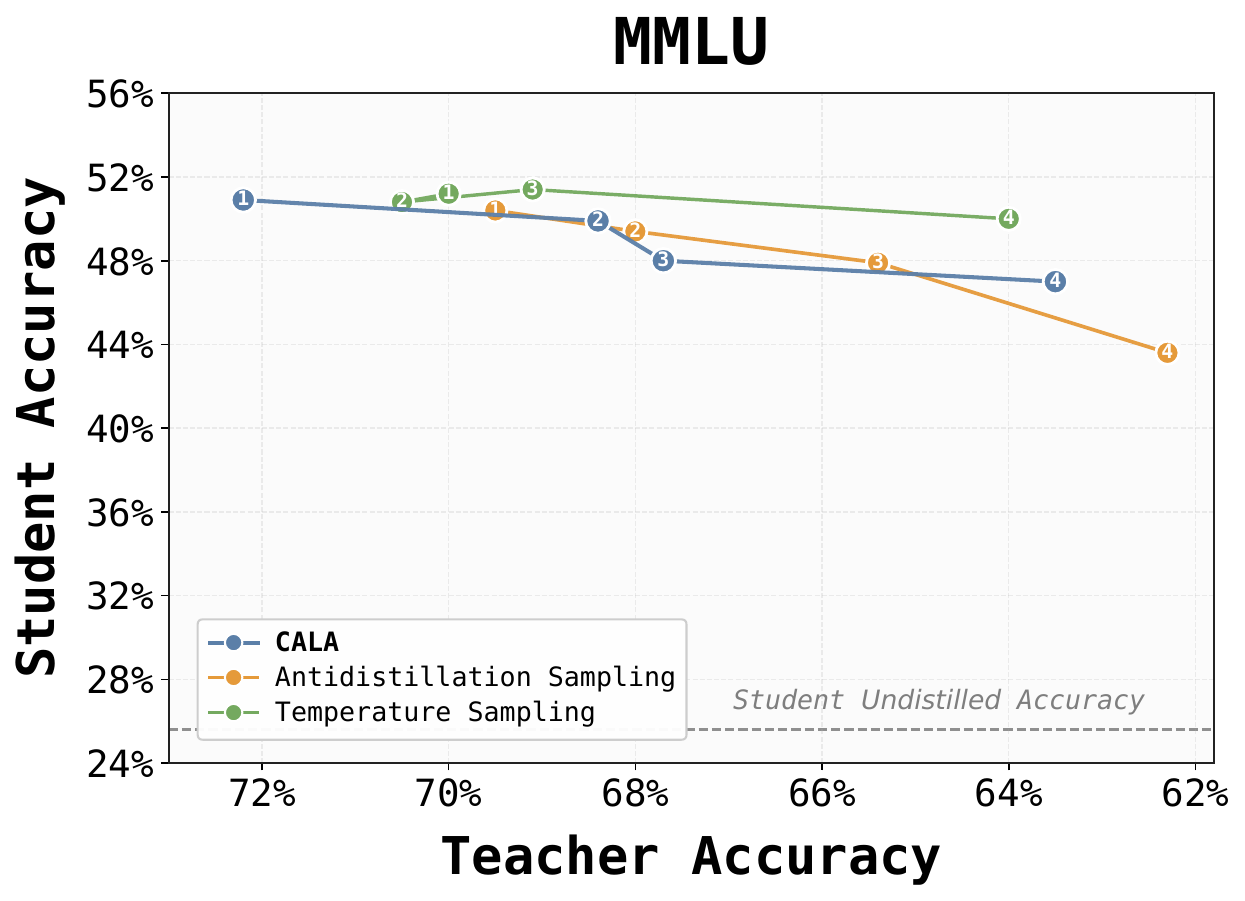}
        \vspace{-0.5em}
        \centerline{\small (b) MMLU}
    \end{minipage}
    \hfill
    \begin{minipage}[t]{0.49\linewidth}
        \centering
        \includegraphics[width=\linewidth]{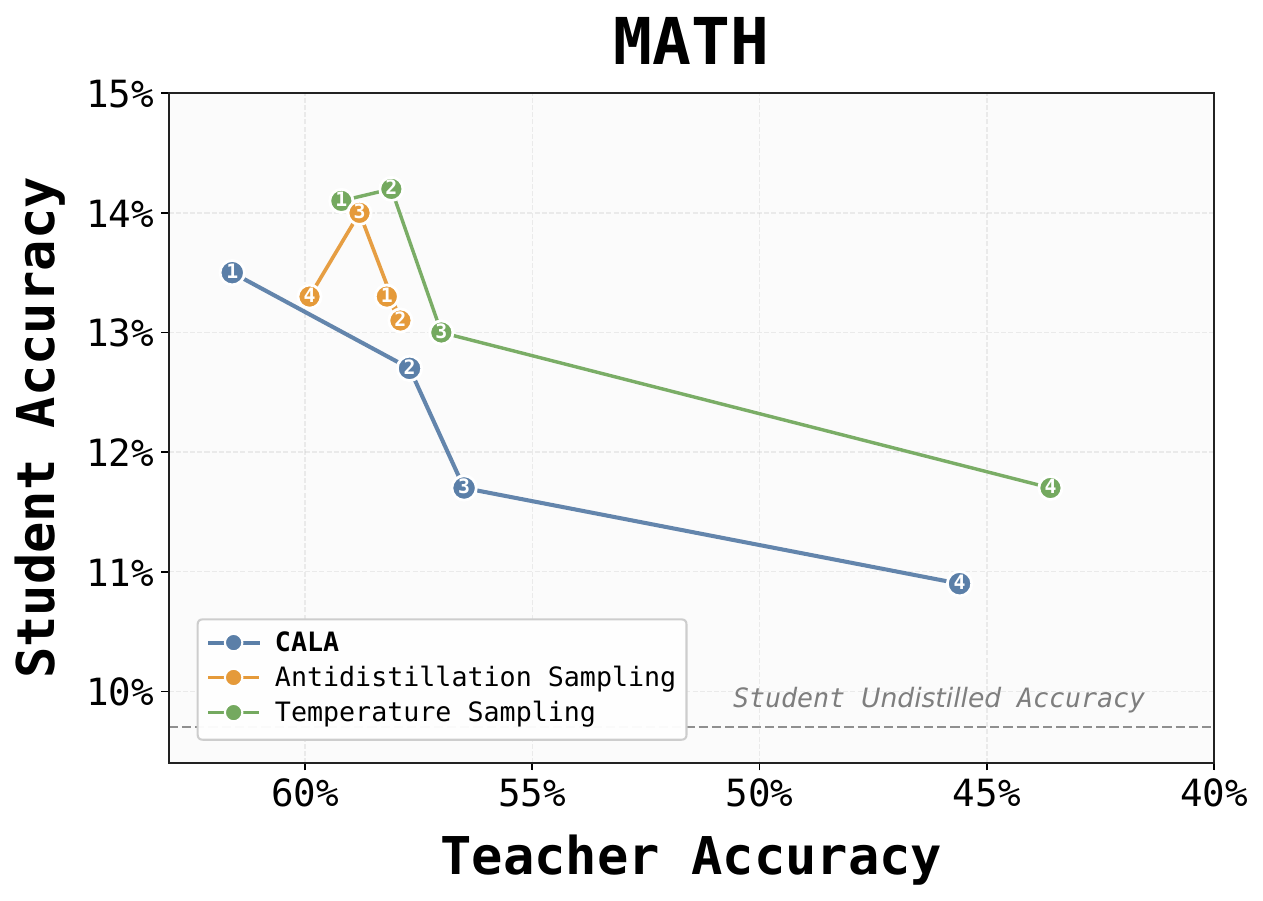}
        \vspace{-1em}
        \centerline{\small (c) MATH}
    \end{minipage}

    \caption{Utility-distillability trade-offs across three benchmarks. The x-axis reports defended teacher accuracy, while the y-axis reports the accuracy of a downstream student distilled from the generated traces. Lower student accuracy at comparable teacher accuracy indicates stronger protection. For GSM8K, the lines show three-seed means and the shaded bands show $\pm$ one standard deviation in student accuracy; MMLU and MATH report the original evaluation runs.}
    \vspace{-1em}
    \label{fig:tradeoff}
\end{figure}

\textbf{Utility-Distillability Trade-offs.}
As illustrated in Figure~\ref{fig:tradeoff}(a), CALA achieves the strongest utility-distillability trade-off on GSM8K among the evaluated methods. At comparable defended-teacher accuracies, CALA consistently yields lower mean student accuracy than both ADS and temperature sampling, indicating stronger protection against trace-based distillation. For instance, at approximately 79--80\% mean teacher accuracy, the distilled student reaches 50.8\% mean accuracy under ADS but drops to 47.7\% under CALA. While temperature sampling also degrades student accuracy, it follows a sub-optimal trade-off curve, indicating that ordinary sampling noise does not account for the protection provided by our learned adapter.

These trade-offs extend to MATH and MMLU (Figure~\ref{fig:tradeoff}b, c). On MATH, CALA is particularly effective in the critical high-utility regime, aggressively reducing student accuracy while preserving high teacher performance. On MMLU, the margins between methods are narrower. While CALA still improves over temperature sampling in the high- and mid-utility regions, ADS achieves the lowest student accuracy at its most aggressive setting. However, as we further analyze in Section~\ref{sec:fluency_results}, ADS attains this extreme reduction in distillability at the cost of severe surface-level degeneration.

\textbf{Fluency and Degeneration Analysis.}
We further evaluate surface-level generation quality using GPT-4o mini as an automatic fluency judge, which scores grammaticality, coherence, readability, and local repetition while ignoring mathematical correctness and the final answer; see Appendix~\ref{app:fluency_judge} for the full prompt and scoring protocol.
This evaluation is important because a defense that reduces student accuracy by making traces unreadable would not preserve the utility we aim to protect.

Figure~\ref{fig:fluency} evaluates whether lower downstream-student accuracy is accompanied by surface-level degeneration. On GSM8K and MMLU, CALA maintains higher fluency than ADS across the evaluated operating points, with ADS showing a sharper fluency drop as defense strength increases. Together with the accuracy curves, these results suggest that the high-utility reductions in distillability are not solely explained by surface degeneration.
\label{sec:fluency_results}

\begin{figure}[t]

    \begin{minipage}[t]{0.49\linewidth}
        \centering
        \includegraphics[width=\linewidth]{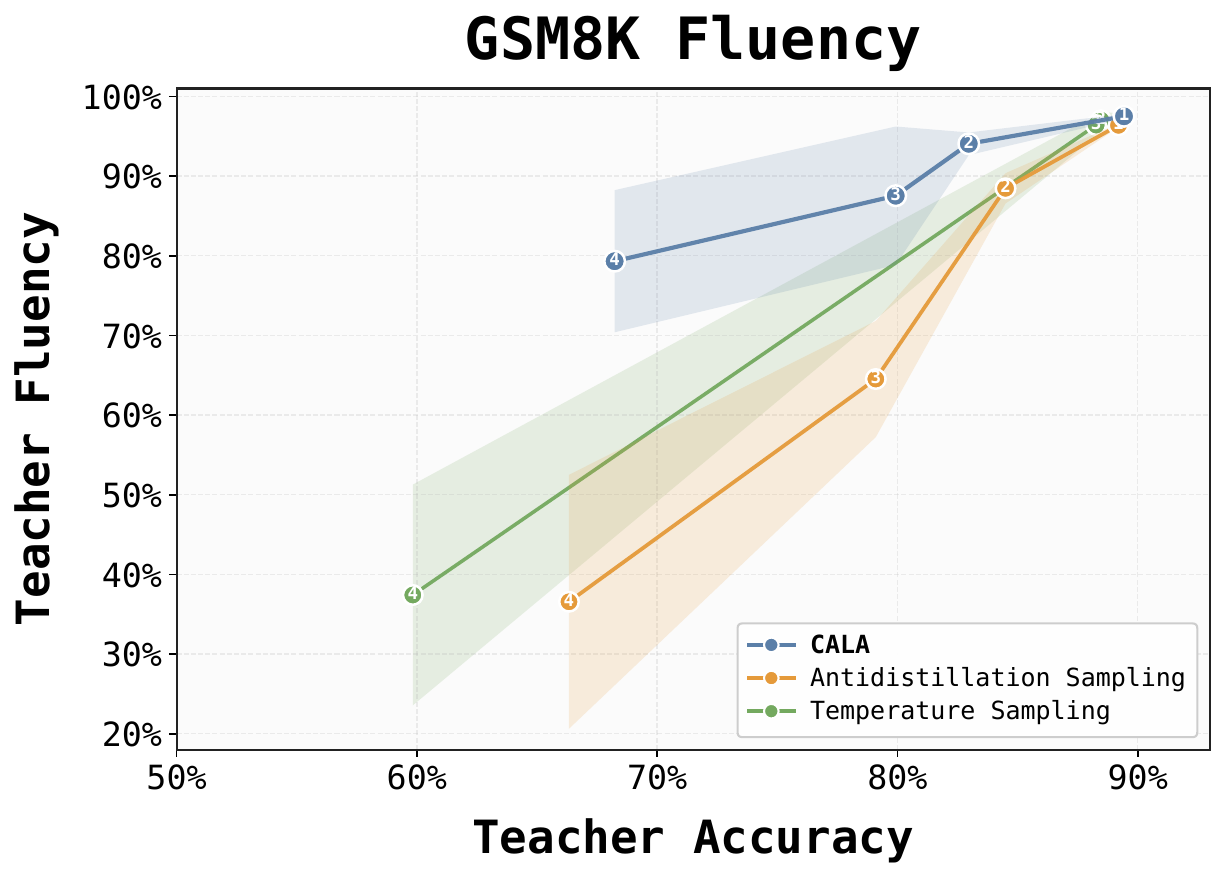}
        \vspace{-0.5em}
        \centerline{\small (a) GSM8K}
    \end{minipage}
    \hfill
    \begin{minipage}[t]{0.49\linewidth}
        \centering
        \includegraphics[width=\linewidth]{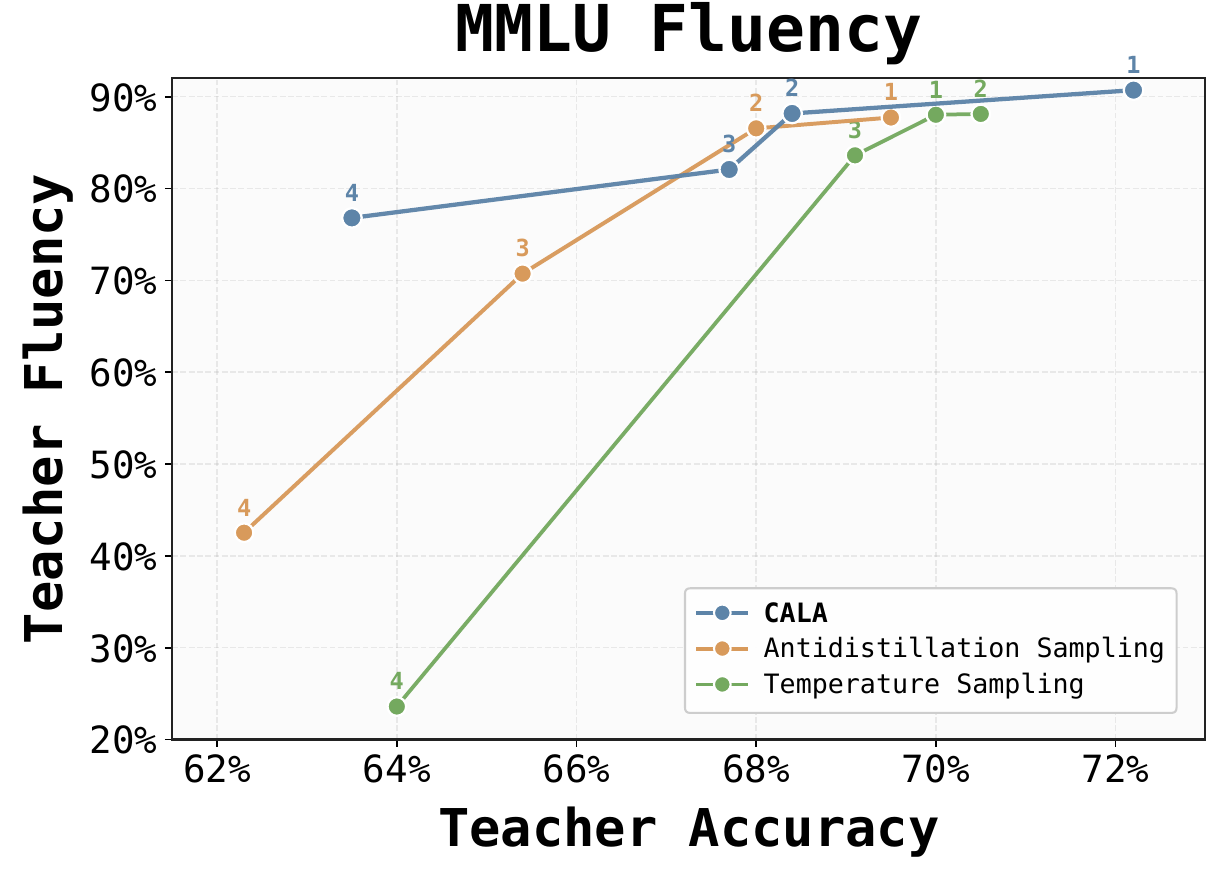}
        \vspace{-0.5em}
        \centerline{\small (b) MMLU}
    \end{minipage}

    \caption{Fluency evaluation using GPT-4o mini as an automatic judge. The judge scores surface quality while ignoring mathematical correctness and the final answer. For GSM8K, the lines show three-seed means and the shaded bands show $\pm$ one standard deviation in fluency; MMLU reports the original evaluation run.}
    \label{fig:fluency}
    \vspace{-0.5em}
\end{figure}

\textbf{Inference efficiency.}
We benchmark inference costs on a single H100 GPU using a batch size of 64 and a maximum sequence length of 1024 tokens. As shown in Table~\ref{tab:inference_cost}, CALA stays close to the base teacher's inference cost, whereas ADS is substantially slower and consumes nearly twice the peak memory due to its online proxy-model computation. This supports the intended shift from decoding-time intervention to a lightweight training-time adapter.

\begin{table}[h]
\centering
\small
\setlength{\tabcolsep}{6pt}
\caption{Comparison of inference costs and memory footprints. Memory is reported in GB.}
\label{tab:inference_cost}
\begin{tabular}{lcccc}
\toprule
Method & Time / batch & Tokens / s & Avg. memory & Peak memory \\
\midrule
Base teacher & 25.8s & 828.5 & 15.26 & 20.07 \\
ADS & 89.2s & 465.1 & 32.39 & 39.28 \\
CALA & 31.5s & 708.3 & 15.88 & 20.24 \\
\bottomrule
\end{tabular}
\end{table}

\textbf{Proxy sensitivity.}
Because our method uses a proxy only during adapter training, we test whether the learned defense is sensitive to proxy choice. 
We observe that replacing the default 1.5B proxy with a 3B proxy from the same tokenizer-compatible family preserves the qualitative utility-distillability trade-off on GSM8K, as shown in Figure~\ref{fig:gsm8k_proxy_sensitivity} in Appendix~\ref{app:additional_experiments}. This study establishes robustness to proxy size within a compatible token space, not to arbitrary proxy families or tokenizers.

\begin{figure*}[t]
    \centering
    \begin{minipage}[t]{0.59\textwidth}
        \vspace{0pt}
        \centering
        \includegraphics[width=\linewidth]{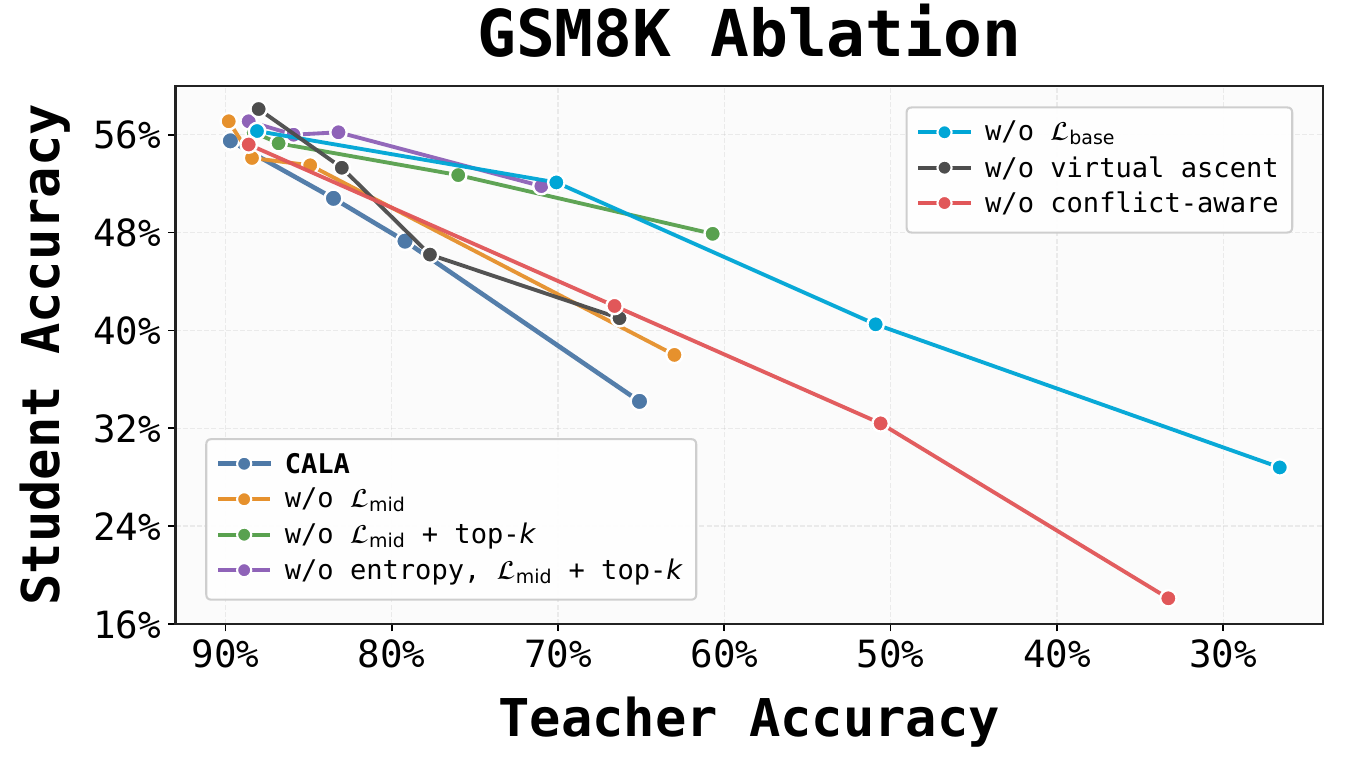}
        \vspace{-0.4em}
        \centerline{\small (a) Utility-distillability trade-off}
    \end{minipage}
    \hfill
    \begin{minipage}[t]{0.39\textwidth}
        \vspace{0pt}
        \centering
        \includegraphics[width=\linewidth]{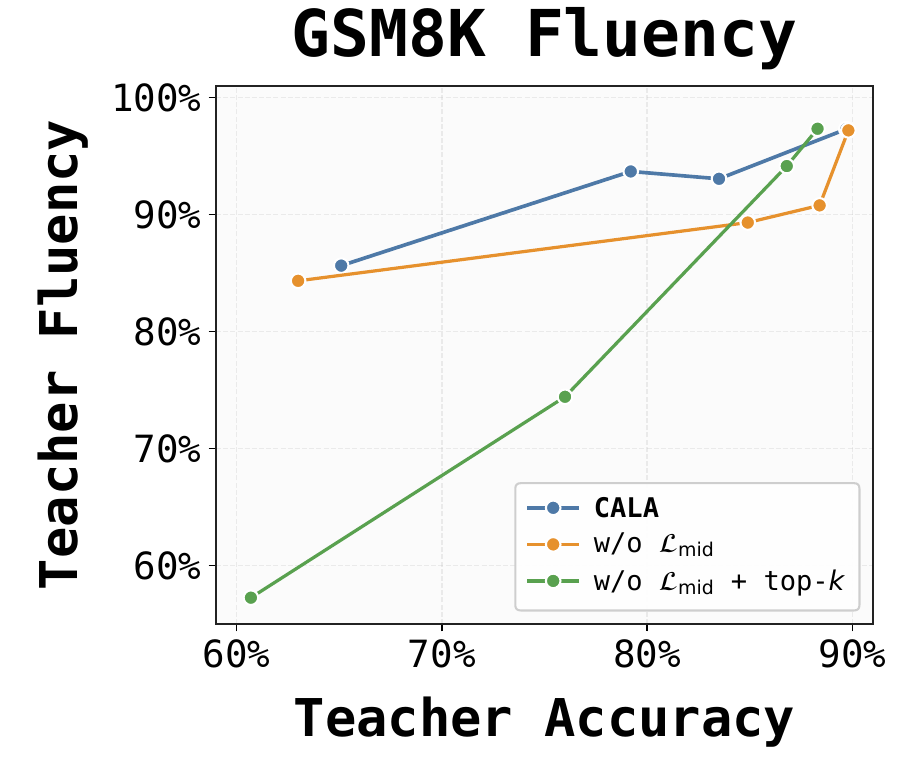}
        \vspace{-0.4em}
        \centerline{\small (b) Teacher fluency}
    \end{minipage}
    \caption{GSM8K ablations. Panel (a) reports the utility-distillability trade-off for objective and support variants as well as individual removals of the base anchor, virtual ascent, and conflict-aware update. Panel (b) reports teacher fluency for variants with available fluency measurements. Lower student accuracy at comparable teacher accuracy indicates stronger protection.}
    \label{fig:gsm8k_ablation}
    \vspace{-1em}
\end{figure*}
\textbf{Ablations.}
We ablate the main objective and optimization components on GSM8K. Figure~\ref{fig:gsm8k_ablation}(a) shows that the full objective gives the best overall trade-off. Removing $\mathcal{L}_{\mathrm{mid}}$ makes the results less consistent, while jointly removing $\mathcal{L}_{\mathrm{mid}}$ and the top-$k$ restriction further weakens protection. Removing entropy weighting as well leaves higher downstream-student accuracy, suggesting that these components jointly focus the adapter update on useful anti-distillation directions. Among the individual removals, omitting $\mathcal{L}_{\mathrm{base}}$ sharply reduces teacher utility. At comparable 65--67\% teacher accuracy, removing virtual ascent or the conflict-aware update raises student accuracy from 34.2\% to 41.0\% and 42.0\%, respectively.

Figure~\ref{fig:gsm8k_ablation}(b) shows that jointly removing $\mathcal{L}_{\mathrm{mid}}$ and the top-$k$ restriction reduces fluency under stronger defense settings. Exact operating points and available fluency values are reported in Appendix~\ref{app:additional_experiments}.

\textbf{Robustness and transfer.}
We additionally audit reasoning validity, vary the downstream student, evaluate a cross-task distribution shift, and test an adaptive filtering attack; additional protocols and results are in Appendix~\ref{app:rebuttal_experiments}. In a blinded audit of 100 GSM8K traces at approximately matched teacher accuracy, 36/50 CALA traces and 31/50 ADS traces were strictly valid. The paired difference is not statistically significant (exact McNemar $p=0.332$); within this audit, we did not observe a higher invalid-rationale rate for CALA, and we do not claim superiority in reasoning correctness.

Protection transfers from the Qwen training proxy to Gemma-2-2B-IT and remains competitive for Qwen2.5-3B in the high-utility regime. Under the MetaMathQA$\rightarrow$MMLU-traces$\rightarrow$GPQA shift, CALA lowers student accuracy from 31.2\% to 27.2\% relative to ADS at exactly matched 37.9\% teacher accuracy, while the methods are effectively tied at a second matched point.

\textbf{Adaptive perplexity filtering.}
We also test an adaptive attacker that uses the student's pretrained base model to discard the 20\% highest-perplexity traces before fine-tuning.
This experiment uses the original single-run operating points reported in Table~\ref{tab:exact_acc_all}.

\begin{table}[htbp]
\centering
\small
\setlength{\tabcolsep}{9pt}
\caption{Adaptive PPL-filtering attack on GSM8K. Lower student accuracy indicates stronger protection.}
\label{tab:ppl_filtering}
\begin{tabular}{cccccc}
\toprule
\multicolumn{2}{c}{\shortstack{Teacher acc.$\uparrow$\\\phantom{Base}}}
& \multicolumn{2}{c}{\shortstack{Student acc. $\downarrow$\\No filtering}}
& \multicolumn{2}{c}{\shortstack{Student acc. $\downarrow$\\PPL filtering}} \\
\cmidrule(lr){1-2}\cmidrule(lr){3-4}\cmidrule(lr){5-6}
CALA & ADS & CALA & ADS & CALA & ADS \\
\midrule
89.7 & 89.7 & 55.5 & 61.0 & 56.4 & 59.6 \\
83.5 & 84.0 & 50.8 & 55.7 & 54.0 & 56.7 \\
79.2 & 78.9 & 47.3 & 50.4 & 51.8 & 50.0 \\
65.1 & 64.8 & 34.2 & 38.6 & 44.6 & 41.3 \\
\bottomrule
\end{tabular}
\end{table}

As shown in Table~\ref{tab:ppl_filtering}, filtering partially recovers student performance. CALA remains stronger around 90\% and 84\% teacher accuracy, whereas ADS is stronger at the two lower-utility settings. Separately, across three seeds, CALA yields lower mean student accuracy than ADS at the first three operating points; variability is much larger at the most aggressive setting. These results support transfer and initial robustness only in the evaluated settings, rather than against arbitrary students or adaptive attacks; paraphrasing- and verifier-based attacks remain unevaluated.

\section{Conclusion}
\label{sec:conclusion}

We presented CALA, a training-time defense for protecting reasoning traces against black-box distillation. By training a lightweight residual logit adapter on top of a frozen teacher, CALA amortizes proxy-guided anti-distillation pressure into the model and avoids proxy-student computation during decoding. Across reasoning benchmarks, CALA reduces downstream student imitation while preserving teacher utility in high-utility regimes, with better surface fluency than ADS on GSM8K.

CALA is limited to the studied black-box trace-distillation setting and should not be interpreted as protection against all forms of model extraction or adaptive distillation. Its effectiveness may depend on the training-time proxy, and the current proxy-size study assumes a compatible token space. Although we evaluate cross-student transfer and perplexity-based filtering, paraphrasing, verifier-based attacks, and tokenizer-mismatched training proxies remain open. Aggressive settings can also reduce teacher accuracy or fluency. More broadly, trace-modification defenses may help reduce unauthorized imitation but should be deployed carefully to avoid reducing readability or user trust.

\section*{Acknowledgments}
This research was funded by Vingroup Innovation Foundation (VINIF) under project code VINIF.2026.DA117.

\newpage
\bibliographystyle{plainnat}
\bibliography{reference}

\newpage
\appendix

\section{Analysis of the Conflict-Aware Update}
\label{app:conflict_aware_analysis}

This section gives a first-order interpretation of the conflict-aware optimization rule. The projection view is similar in spirit to the constrained first-order update of~\citep{wu2024erasediff}, which balances preservation and erasure by choosing an update direction that stays close to a utility gradient while satisfying a constraint on the forgetting objective. In our setting, the adapter is trained with the utility-preserving objective \(\mathcal{L}_{\mathrm{util}}\), which anchors the adapted distribution to the frozen teacher, and the anti-distillation objective \(\mathcal{L}_{\mathrm{anti}}\), which moves the adapted distribution toward regions that are less useful for a proxy student. Because these objectives can induce conflicting gradients, the update combines them adaptively:
\[
\nabla_\theta \mathcal{L}
=
\nabla_\theta \mathcal{L}_{\mathrm{util}}
+
\lambda \nabla_\theta \mathcal{L}_{\mathrm{anti}},
\]
where
\[
\lambda
=
\max\left(
0,\,
\kappa
-
\frac{
\left\langle
\nabla_\theta \mathcal{L}_{\mathrm{util}},
\nabla_\theta \mathcal{L}_{\mathrm{anti}}
\right\rangle
}{
\left\|\nabla_\theta \mathcal{L}_{\mathrm{anti}}\right\|_2^2+\epsilon
}
\right).
\]
We show that, up to the numerical stabilizer \(\epsilon\), this rule is the closed-form solution of a constrained projection problem.

Let
\[
g_u = \nabla_\theta \mathcal{L}_{\mathrm{util}}(\theta),
\qquad
g_a = \nabla_\theta \mathcal{L}_{\mathrm{anti}}(\theta).
\]
For a gradient-descent step \(\theta^+=\theta-\eta d\), the utility-only choice is \(d=g_u\). However, if \(g_u\) is poorly aligned with \(g_a\), this update can weaken the anti-distillation signal. We therefore choose the direction closest to \(g_u\) while enforcing a minimum first-order decrease in \(\mathcal{L}_{\mathrm{anti}}\).

\begin{theorem}[Conflict-aware projection]
\label{thm:conflict_projection}
Let \(g_u,g_a\in\mathbb{R}^p\), with \(g_a\neq 0\), and let \(\kappa>0\). Consider the constrained projection problem:
\[
d^\star
=
\arg\min_d \frac{1}{2}\|d-g_u\|_2^2
\quad
\mathrm{s.t.}
\quad
\langle g_a,d\rangle \ge \kappa\|g_a\|_2^2 .
\]
The unique solution is
\[
d^\star = g_u+\lambda^\star g_a,
\qquad
\lambda^\star
=
\max\left\{
0,\,
\kappa-
\frac{\langle g_u,g_a\rangle}{\|g_a\|_2^2}
\right\}.
\]
\end{theorem}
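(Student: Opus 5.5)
This is the Euclidean projection of \(g_u\) onto the closed half-space \(H=\{d:\langle g_a,d\rangle\ge \kappa\|g_a\|_2^2\}\). I would prove it in three steps: existence and uniqueness, the KKT characterization, and a case split on whether the constraint is active.

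\textbf{Existence and uniqueness.} Since \(g_a\neq 0\), the set \(H\) is a nonempty, closed, convex half-space; for instance, \(\kappa g_a\in H\). The objective \(\tfrac12\|d-g_u\|_2^2\) is strongly convex and coercive. A minimizer therefore exists and is unique.

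\textbf{KKT characterization.} The constraint is affine, so Slater's condition holds trivially. Hence the KKT conditions are necessary and sufficient for optimality. Introduce the Lagrangian
\[
\mathcal{L}(d,\mu)=\tfrac12\|d-g_u\|_2^2-\mu\bigl(\langle g_a,d\rangle-\kappa\|g_a\|_2^2\bigr),\qquad \mu\ge 0 .
\]
Stationarity gives \(d=g_u+\mu g_a\). The remaining conditions are primal feasibility \(\langle g_a,d\rangle\ge\kappa\|g_a\|_2^2\) and complementary slackness \(\mu\bigl(\langle g_a,d\rangle-\kappa\|g_a\|_2^2\bigr)=0\). Substituting \(d=g_u+\mu g_a\), the constraint value becomes
\[
\langle g_a,g_u\rangle+\mu\|g_a\|_2^2-\kappa\|g_a\|_2^2 .
\]

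\textbf{Case split.} Set \(c=\kappa-\langle g_u,g_a\rangle/\|g_a\|_2^2\).

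If \(c\le 0\), then \(g_u\) is already feasible. Taking \(\mu=0\) satisfies every KKT condition, so \(d^\star=g_u\).

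If \(c>0\), then \(\mu=0\) is infeasible, so complementary slackness forces the constraint to be active. Solving
\[
\langle g_a,g_u\rangle+\mu\|g_a\|_2^2=\kappa\|g_a\|_2^2
\]
gives \(\mu=c>0\), which is a valid multiplier.

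The two cases combine to \(\lambda^\star=\max\{0,c\}\), and uniqueness of the KKT point follows from strong convexity.

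As an alternative to the KKT route, one can verify the variational inequality \(\langle g_u-d^\star,\,d-d^\star\rangle\le 0\) for all \(d\in H\). This holds because \(g_u-d^\star=-\lambda^\star g_a\) and \(\langle g_a,d-d^\star\rangle\ge 0\) whenever the constraint is active.

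The only mildly delicate step is the case \(c>0\): one must check that the resulting multiplier is nonnegative, which is exactly what the \(\max\) with \(0\) encodes. The rest is routine.

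The paper's rule differs from \(\lambda^\star\) only through the stabilizer \(\varepsilon\) in the denominator. The theorem is stated without \(\varepsilon\), so this difference needs no argument beyond noting that \(\lambda\to\lambda^\star\) as \(\varepsilon\to 0\).
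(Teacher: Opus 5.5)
Your proof is correct and follows essentially the same route as the paper: stationarity of the Lagrangian gives \(d=g_u+\mu g_a\), and a case split on whether \(g_u\) already satisfies \(\langle g_a,d\rangle\ge\kappa\|g_a\|_2^2\) then yields either \(\mu=0\) or the active-constraint value \(\kappa-\langle g_u,g_a\rangle/\|g_a\|_2^2\). Your appeal to a constraint qualification and your explicit check that the multiplier is nonnegative in the active case make the KKT step slightly more complete than the paper's version, and the variational-inequality check is a valid independent confirmation.
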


\begin{proof}
The objective is strongly convex in \(d\), and the feasible set is a closed half-space, so the projection problem has a unique optimum. Consider the Lagrangian
\[
\mathcal{J}(d,\lambda)
=
\frac{1}{2}\|d-g_u\|_2^2
+
\lambda\left(\kappa\|g_a\|_2^2-\langle g_a,d\rangle\right),
\qquad \lambda\ge 0.
\]
The stationarity condition gives
\[
\nabla_d\mathcal{J}(d,\lambda)=d-g_u-\lambda g_a=0,
\]
and hence \(d=g_u+\lambda g_a\). If \(\langle g_a,g_u\rangle\ge \kappa\|g_a\|_2^2\), the constraint is inactive, so \(\lambda^\star=0\) and \(d^\star=g_u\). Otherwise, the constraint is active, and substituting \(d=g_u+\lambda g_a\) into \(\langle g_a,d\rangle=\kappa\|g_a\|_2^2\) gives
\[
\langle g_a,g_u\rangle+\lambda\|g_a\|_2^2
=
\kappa\|g_a\|_2^2.
\]
Thus,
\[
\lambda
=
\kappa-
\frac{\langle g_u,g_a\rangle}{\|g_a\|_2^2}.
\]
Combining the active and inactive cases gives the claimed \(\lambda^\star\), and therefore \(d^\star=g_u+\lambda^\star g_a\).
\end{proof}

In practice, we replace \(\|g_a\|_2^2\) by \(\|g_a\|_2^2+\epsilon\) to avoid instability when the anti-distillation gradient is very small.

\begin{corollary}[First-order anti-distillation progress]
\label{cor:first_order_progress}
Let \(d^\star\) be the direction from Theorem~\ref{thm:conflict_projection}. If \(\mathcal{L}_{\mathrm{anti}}\) is differentiable at \(\theta\), then
\[
\mathcal{L}_{\mathrm{anti}}(\theta-\eta d^\star)
\le
\mathcal{L}_{\mathrm{anti}}(\theta)
-
\eta\kappa\|g_a\|_2^2
+
O(\eta^2).
\]
\end{corollary}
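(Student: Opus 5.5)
The plan is a first-order Taylor expansion of \(\mathcal{L}_{\mathrm{anti}}\) along the update direction, combined with the feasibility of \(d^\star\) in the constrained problem of Theorem~\ref{thm:conflict_projection}. Differentiability of \(\mathcal{L}_{\mathrm{anti}}\) at \(\theta\) gives
\[
\mathcal{L}_{\mathrm{anti}}(\theta-\eta d^\star)
=
\mathcal{L}_{\mathrm{anti}}(\theta)
-
\eta\langle g_a,d^\star\rangle
+
r(\eta),
\qquad
r(\eta)=o(\eta\|d^\star\|_2)=o(\eta),
\]
because \(d^\star\) is a fixed vector independent of \(\eta\).

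Next, Theorem~\ref{thm:conflict_projection} says that \(d^\star\) solves the projection problem. It is therefore feasible, so \(\langle g_a,d^\star\rangle\ge\kappa\|g_a\|_2^2\). The same bound can be read off the two cases in that proof:
\begin{itemize}
\item in the inactive case, \(d^\star=g_u\) and \(\langle g_a,g_u\rangle\ge\kappa\|g_a\|_2^2\) holds by assumption;
\item in the active case, the constraint holds with equality.
\end{itemize}
Since \(\eta>0\), we get \(-\eta\langle g_a,d^\star\rangle\le-\eta\kappa\|g_a\|_2^2\). Substituting into the expansion yields the claimed inequality, with the remainder \(r(\eta)\) in place of the error term.

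The one delicate step is upgrading the remainder from \(o(\eta)\) to \(O(\eta^2)\). Mere differentiability gives only \(o(\eta)\), so the \(O(\eta^2)\) form requires extra regularity. I would make this explicit by assuming \(\mathcal{L}_{\mathrm{anti}}\) is twice continuously differentiable near \(\theta\), or that \(\nabla\mathcal{L}_{\mathrm{anti}}\) is \(L\)-Lipschitz on a neighborhood. In that case the descent lemma gives \(|r(\eta)|\le \tfrac{L}{2}\eta^2\|d^\star\|_2^2\). To bound \(\|d^\star\|_2\), I would use
\[
\lambda^\star\le \kappa+\frac{|\langle g_u,g_a\rangle|}{\|g_a\|_2^2},
\]
so \(\|d^\star\|_2\le \|g_u\|_2+\lambda^\star\|g_a\|_2\), a constant independent of \(\eta\). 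Under bare differentiability, I would state the result with \(o(\eta)\) and remark that it becomes \(O(\eta^2)\) under smoothness, which is how I read the corollary's intent.
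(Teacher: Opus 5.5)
Your proposal is correct and matches the paper's proof: a first-order Taylor expansion of $\mathcal{L}_{\mathrm{anti}}$ along $-\eta d^\star$, combined with the feasibility condition $\langle g_a,d^\star\rangle\ge\kappa\|g_a\|_2^2$ from Theorem~\ref{thm:conflict_projection}. Your caveat about the remainder is also right, and the paper's proof skips it. The paper asserts an $O(\eta^2)$ error term under differentiability at $\theta$ alone, which only gives $o(\eta)$; for example, $f(x)=x+|x|^{3/2}$ in one dimension with the constraint active violates the stated bound, so your added $C^2$ or locally Lipschitz-gradient assumption is what makes the stated form rigorous.
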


\begin{proof}
By Taylor expansion,
\[
\mathcal{L}_{\mathrm{anti}}(\theta-\eta d^\star)
=
\mathcal{L}_{\mathrm{anti}}(\theta)
-
\eta\langle g_a,d^\star\rangle
+
O(\eta^2).
\]
The constraint in Theorem~\ref{thm:conflict_projection} gives \(\langle g_a,d^\star\rangle\ge \kappa\|g_a\|_2^2\), which proves the claim.
\end{proof}

\paragraph{Interpretation.}
The conflict-aware update is therefore not an arbitrary weighted sum. It follows the utility-preserving gradient when that direction already has sufficient anti-distillation alignment, and otherwise adds the smallest correction in the anti-distillation gradient direction needed to meet the margin. Thus, \(\lambda\) decreases when the gradients are aligned and increases when they are orthogonal or opposed, preventing the utility objective from washing out the anti-distillation signal.

\section{Experimental Details}
\label{app:hyperparameters}

\subsection{Models and Training Setup}
\label{app:models_training}

Table~\ref{tab:model_roles} summarizes the model roles used in our experiments. The
defended teacher is DeepSeek-R1-Distill-Qwen-7B. We freeze the teacher backbone
and the language-modeling head, and train only a residual logit adapter. The
training-time proxy is Qwen2.5-1.5B, with its token embeddings resized to match the
teacher vocabulary. Downstream trace-distillation is evaluated by training
Llama-3.2-3B students on generated traces. The ADS baseline uses a Qwen2.5-3B proxy
during decoding.

\begin{table}[H]
\centering
\small
\caption{Model roles used in the experiments.}
\label{tab:model_roles}
\begin{tabular}{lll}
\toprule
Role & Model & Usage \\
\midrule
Defended teacher & DeepSeek-R1-Distill-Qwen-7B & Frozen teacher \\
Training proxy & Qwen2.5-1.5B & Adapter-training signal \\
ADS proxy & Qwen2.5-3B & Decoding-time baseline \\
Downstream student & Llama-3.2-3B & Trace-distillation evaluation \\
\bottomrule
\end{tabular}
\end{table}

The model and data licenses differ and are respected in our use: DeepSeek-R1-Distill-Qwen-7B and MetaMathQA use the MIT License, Qwen2.5 uses Apache 2.0, and Llama 3.2 uses the Llama 3.2 Community License. The GSM8K, MATH, and MMLU evaluation datasets are distributed under the MIT License.

We train the adapter on 6,800 MetaMathQA samples with teacher-generated responses.
The proxy student is used only to construct the training signal: for each accumulation
window, we compute masked next-token cross-entropy gradients, take one virtual
gradient-ascent step, use the resulting proxy distribution to define the
anti-distillation target, and then discard the virtual proxy update.

\subsection{Adapter Training Hyperparameters}
\label{app:adapter_hparams}

All KL and JS terms are computed on restricted top-\(k\) supports with \(k=50\). This
keeps both anti-distillation and utility-preservation losses on plausible
high-probability tokens.
Table~\ref{tab:adapter_hparams} summarizes the full adapter training configuration.

\begin{table}[H]
\centering
\small
\caption{Residual adapter training hyperparameters.}
\label{tab:adapter_hparams}
\begin{tabular}{ll}
\toprule
Hyperparameter & Value \\
\midrule
Training samples & 6,800 \\
Epochs & 5 \\
Batch size & 8 \\
Proxy accumulation steps \(G\) & 2 \\
Optimizer & AdamW \\
Learning rate & \(3\times10^{-4}\) \\
Weight decay & \(3\times10^{-2}\) \\
Schedule / warmup ratio & Warmup cosine / 0.05 \\
Adapter gradient clipping & 1.0 \\
Maximum sequence / prompt length & 600 / 128 \\
Proxy virtual step radius \(\rho\) & \(10^{-3}\) \\
Proxy gradient clipping & 1.0 \\
Top-\(k\) support & 50 \\
Conflict margin \(\kappa\) & 0.5 \\
Numerical stability \(\epsilon\) & \(10^{-6}\) \\
Seed & 42 \\
\bottomrule
\end{tabular}
\end{table}

\subsection{Evaluation Details}
\label{app:eval_detail}
\paragraph{Generation.} Our evaluation across GSM8K, MATH, and MMLU involves generating traces from the defended teacher to train a downstream student. To construct the trade-off curves, we evaluate four operating points per method, which are summarized in Table~\ref{tab:main_operating_points}. Specifically, for CALA, we vary the adapter strength $\delta$ and sampling temperature $\tau$; for ADS, we vary $\tau$ and the antidistillation weight $\lambda_{\mathrm{ADS}}$; and for the temperature baseline, we solely vary $\tau$. Unless otherwise noted, trace generation for CALA utilizes top-$k$ sampling with $k=50$. Generation is capped at 1024 tokens across all three benchmarks.

\begin{table}[h]
\vspace{-0.5em}
\centering
\small
\setlength{\tabcolsep}{6pt}
\caption{Evaluation-time operating points used for CALA, ADS, and the temperature baseline in the main figures.}
\label{tab:main_operating_points}
\begin{tabular}{c c c c}
\toprule
Point & CALA $(\tau,\delta)$ & ADS $(\tau,\lambda_{\mathrm{ADS}})$ & Temp. $\tau$ \\
\midrule
1 & $(0.50, 0.25)$ & $(0.90, 1.58{\times}10^{-2})$ & $0.90$ \\
2 & $(0.75, 0.75)$ & $(0.90, 2.37{\times}10^{-2})$ & $1.16$ \\
3 & $(0.50, 0.80)$ & $(0.90, 3.16{\times}10^{-2})$ & $1.26$ \\
4 & $(0.50, 0.85)$ & $(1.00, 3.95{\times}10^{-2})$ & $1.50$ \\
\bottomrule
\end{tabular}
\end{table}

\paragraph{Trace Generation Prompts. } We use fixed system prompts when generating teacher traces. For GSM8K and MATH, we use the following math-reasoning prompt:

\begin{tcolorbox}[
title=Math trace-generation prompt,
colback=gray!3,
colframe=gray!45,
fonttitle=\bfseries,
breakable,
arc=1mm,
boxrule=0.4pt
]
You are a math teacher. You will be given a math problem and you will solve it step by step.

You will output your final solution like \verb|\boxed{ANSWER}|. Be sure to include relevant units within the brackets and fully evaluate arithmetic expressions.
\end{tcolorbox}

For MMLU, we use the following multiple-choice prompt:

\begin{tcolorbox}[
title=MMLU trace-generation prompt,
colback=gray!3,
colframe=gray!45,
fonttitle=\bfseries,
breakable,
arc=1mm,
boxrule=0.4pt
]
You are a teacher. You will be given a problem and you will solve it step by step.

You will output your final solution like \verb|\boxed{X}|, where X is exactly one capital letter corresponding to the correct option (A, B, C, or D).

Do not include any extra text, punctuation, or words inside the box.
\end{tcolorbox}

With answer forcing, we append a fixed prompt string immediately before the final short answer to enforce a consistent answer format:

\begin{tcolorbox}[
title=Answer-forcing string,
colback=gray!3,
colframe=gray!45,
fonttitle=\bfseries,
breakable,
arc=1mm,
boxrule=0.4pt
]
\verb|\n\n**Final Answer**\n\\[\boxed{|
\end{tcolorbox}

\paragraph{Distillation Protocol and Evaluation.} To measure trace distillability, we train students on generated teacher
traces using the same protocol for our method and ADS. Each example is formatted as a
problem followed by the generated reasoning trace and final answer as the assistant
completion. Prompt tokens are masked, so the loss is applied only to the assistant
response. For GSM8K and MATH, we extract the final answer and compare it with the reference using
numeric or symbolic verification; MATH additionally uses boxed-answer parsing when
applicable. For MMLU, we extract the selected multiple-choice option and compare it
with the ground-truth label.
Table~\ref{tab:student_hparams} summarizes the downstream student-distillation configuration.

\begin{table}[H]
\centering
\small
\caption{Downstream student-distillation hyperparameters.}
\label{tab:student_hparams}
\begin{tabular}{ll}
\toprule
Hyperparameter & Value \\
\midrule
Fine-tuning method & LoRA \\
LoRA target modules & q, k, v, o, gate, up, down projections \\
LoRA rank / alpha / dropout & 128 / 128 / 0.0 \\
LoRA bias & none \\
Optimizer & AdamW \\
Learning rate / weight decay & \(5\times10^{-4}\) / 0.1 \\
Schedule / warmup ratio & Cosine / 0.03 \\
Batch / per-device batch size & 16 / 8 \\
Epochs & 3 \\
Gradient clipping & 1.0 \\
Maximum sequence length & 4096 \\
Precision & bfloat16 \\
Seed & 69 \\
\bottomrule
\end{tabular}
\end{table}

\section{Additional Experiments}
\label{app:additional_experiments}
\label{app:rebuttal_experiments}

\paragraph{Exact Operating-Point Results.}
Tables~\ref{tab:exact_acc_all} and~\ref{tab:exact_fluency} report the original per-run numerical values for the operating points in Figures~\ref{fig:tradeoff} and~\ref{fig:fluency}. The GSM8K panels in those figures show the three-seed means and standard deviations reported in Table~\ref{tab:multiseed}; the MATH and MMLU panels use the values below.

For each defended method, we evaluate four operating points, summarized in Table~\ref{tab:main_operating_points}. Table~\ref{tab:exact_fluency} reports the exact automatic fluency scores used in Figure~\ref{fig:fluency}.

\begin{table}[h]
\centering
\footnotesize
\setlength{\tabcolsep}{2.5pt}
\caption{Exact utility-distillability results. Teacher and student accuracies are reported in percentage points. Point numbers follow Table~\ref{tab:main_operating_points}.}
\label{tab:exact_acc_all}
\begin{tabular}{llcccccccccccc}
\toprule
& & \multicolumn{2}{c}{Point 1} & \multicolumn{2}{c}{Point 2} & \multicolumn{2}{c}{Point 3} & \multicolumn{2}{c}{Point 4} \\
\cmidrule(lr){3-4}\cmidrule(lr){5-6}\cmidrule(lr){7-8}\cmidrule(lr){9-10}
Benchmark & Method
& Teacher & Student 
& Teacher & Student 
& Teacher & Student 
& Teacher & Student & 
\\
\midrule
\multirow{4}{*}{GSM8K}
& CALA
& 89.7 & 55.5 
& 83.5 & 50.8 
& 79.2 & 47.3 
& 65.1 & 34.2 \\
& ADS
& 89.7 & 61.0
& 84.0 & 55.7
& 78.9 & 50.4
& 64.8 & 38.6 \\
& Temperature
& 90.0 & 59.7
& 88.3 & 57.0
& 89.4 & 61.2
& 52.5 & 54.1 \\
& Student undistilled
& -- & 43.5
& -- & 43.5
& -- & 43.5
& -- & 43.5 \\
\midrule
\multirow{4}{*}{MATH}
& CALA
& 61.6 & 13.5
& 57.7 & 12.7
& 56.5 & 11.7
& 45.6 & 10.9 \\
& ADS
& 58.2 & 13.3
& 57.9 & 13.1
& 58.8 & 14.0
& 59.9 & 13.3 \\
& Temperature
& 59.2 & 14.1
& 58.1 & 14.2
& 57.0 & 13.0
& 43.6 & 11.7 \\
& Student undistilled
& -- & 9.7
& -- & 9.7
& -- & 9.7
& -- & 9.7 \\
\midrule
\multirow{4}{*}{MMLU}
& CALA
& 72.2 & 50.9
& 68.4 & 49.9
& 67.7 & 48.0
& 63.5 & 47.0 \\
& ADS
& 69.5 & 50.4
& 68.0 & 49.4
& 65.4 & 47.9
& 62.3 & 43.6 \\
& Temperature
& 70.0 & 51.2
& 70.5 & 50.8
& 69.1 & 51.4
& 64.0 & 50.0 \\
& Student undistilled
& -- & 25.6
& -- & 25.6
& -- & 25.6
& -- & 25.6 \\
\bottomrule
\end{tabular}
\end{table}

\begin{table}[H]
\centering
\footnotesize
\setlength{\tabcolsep}{2.5pt}
\caption{Exact automatic fluency scores and corresponding teacher accuracies. Fluency is scored from 1 to 100 by the automatic judge described in Appendix~\ref{app:fluency_judge}. Point numbers follow Table~\ref{tab:main_operating_points}.}
\label{tab:exact_fluency}
\begin{tabular}{llcccccccc}
\toprule
& & \multicolumn{2}{c}{Point 1} & \multicolumn{2}{c}{Point 2} & \multicolumn{2}{c}{Point 3} & \multicolumn{2}{c}{Point 4} \\
\cmidrule(lr){3-4}\cmidrule(lr){5-6}\cmidrule(lr){7-8}\cmidrule(lr){9-10}
Benchmark & Method & Teacher & Fluency & Teacher & Fluency & Teacher & Fluency & Teacher & Fluency \\
\midrule
\multirow{3}{*}{GSM8K}
& CALA & 89.7 & 97.30 & 83.5 & 93.05 & 79.2 & 93.68 & 65.1 & 85.63 \\
& ADS & 89.7 & 96.54 & 84.0 & 89.81 & 78.9 & 69.68 & 64.8 & 47.84 \\
& Temp. & 90.0 & 96.96 & 88.3 & 96.26 & 89.4 & 95.49 & 52.5 & 27.62 \\
\midrule
\multirow{3}{*}{MATH}
& CALA & 61.6 & 93.96 & 57.7 & 93.35 & 56.5 & 86.65 & 45.6 & 73.52 \\
& ADS & 58.2 & 85.74 & 57.9 & 85.70 & 58.8 & 86.41 & 59.9 & 85.16 \\
& Temp. & 59.2 & 91.93 & 58.1 & 92.07 & 57.0 & 91.22 & 43.6 & 43.65 \\
\midrule
\multirow{3}{*}{MMLU}
& CALA & 72.2 & 90.71 & 68.4 & 88.17 & 67.7 & 82.07 & 63.5 & 76.80 \\
& ADS & 69.5 & 87.73 & 68.0 & 86.55 & 65.4 & 70.73 & 62.3 & 42.52 \\
& Temp. & 70.0 & 88.03 & 70.5 & 88.12 & 69.1 & 83.61 & 64.0 & 23.59 \\
\bottomrule
\end{tabular}
\end{table}

\paragraph{Additional fluency result.}
Figure~\ref{fig:math_fluency_appendix} provides the MATH fluency curve using the same automatic-judge protocol as Figure~\ref{fig:fluency}.

\begin{figure}[H]
    \centering
    \includegraphics[width=0.73\linewidth]{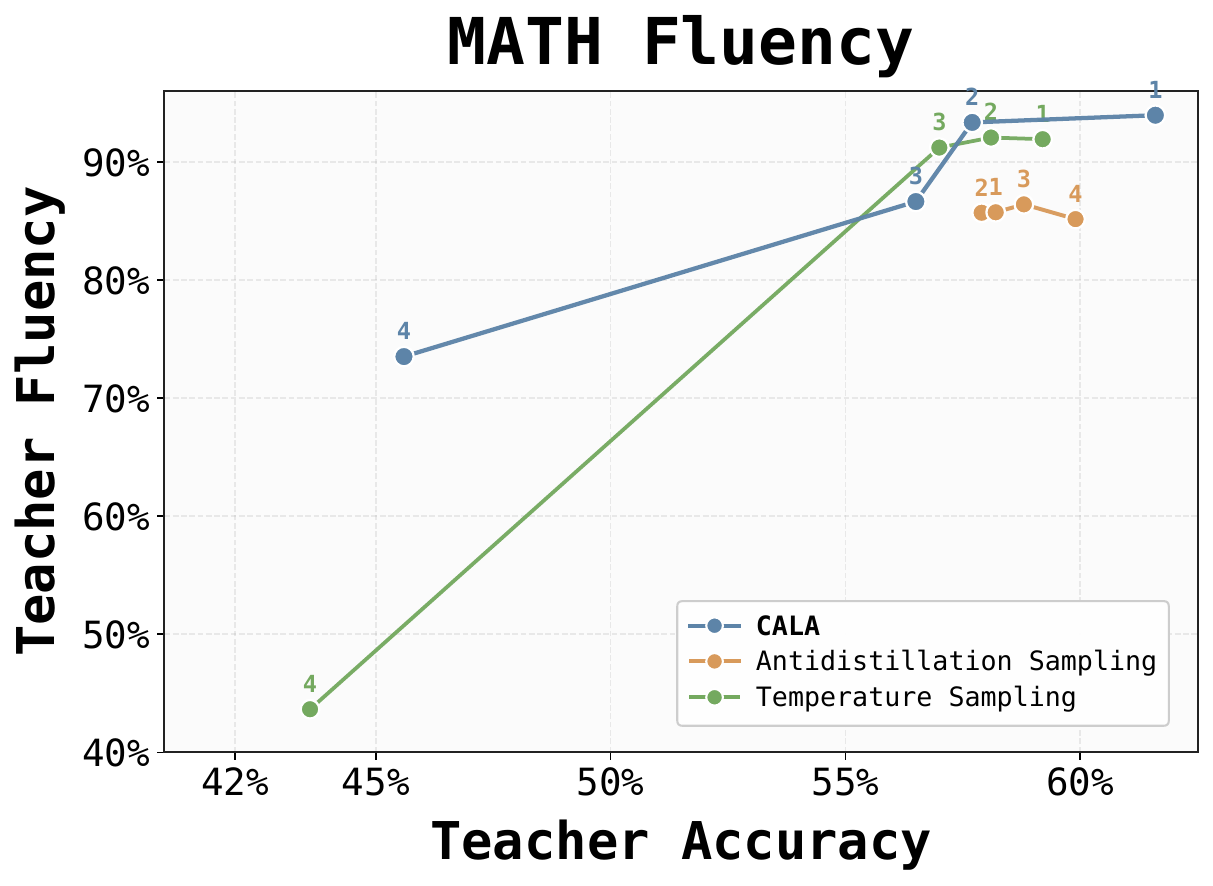}
    \caption{MATH fluency evaluation using GPT-4o mini as an automatic judge.}
    \label{fig:math_fluency_appendix}
    \vspace{-0.5em}
\end{figure}

\paragraph{Multi-seed robustness.}
We repeat GSM8K with seeds 42, 52, and 62. Table~\ref{tab:multiseed} reports mean $\pm$ standard deviation across seeds and supplies the central lines and shaded uncertainty bands in the GSM8K accuracy and fluency panels. The bands depict variability in the vertical metric; variability in teacher accuracy is reported numerically in the table. CALA has lower mean student accuracy than ADS at P1--P3 by 4.46, 4.10, and 3.13 points, respectively. Variance is much larger at P4, so we do not use that point as primary evidence or claim formal significance from three seeds.

\begin{table}[H]
\centering
\small
\setlength{\tabcolsep}{4pt}
\caption{Three-seed GSM8K results. Accuracy and fluency values are mean $\pm$ standard deviation.}
\label{tab:multiseed}
\begin{tabular}{llccc}
\toprule
Method & Configuration & Teacher acc. & Student acc. & Fluency \\
\midrule
CALA & $(0.50,0.25)$ & $89.43\pm0.46$ & $56.07\pm0.81$ & $97.51\pm0.30$ \\
CALA & $(0.75,0.75)$ & $82.97\pm0.68$ & $52.23\pm1.29$ & $94.06\pm1.42$ \\
CALA & $(0.50,0.80)$ & $79.93\pm0.67$ & $47.70\pm1.73$ & $87.54\pm8.69$ \\
CALA & $(0.50,0.85)$ & $68.23\pm3.43$ & $39.20\pm5.51$ & $79.32\pm8.92$ \\
\midrule
ADS & $(0.90,1.58{\times}10^{-2})$ & $89.20\pm1.90$ & $60.53\pm1.55$ & $96.38\pm0.23$ \\
ADS & $(0.90,2.37{\times}10^{-2})$ & $84.50\pm0.56$ & $56.33\pm0.60$ & $88.47\pm1.90$ \\
ADS & $(0.90,3.16{\times}10^{-2})$ & $79.10\pm1.21$ & $50.83\pm0.40$ & $64.53\pm7.29$ \\
ADS & $(1.00,3.95{\times}10^{-2})$ & $66.33\pm1.60$ & $40.37\pm2.56$ & $36.57\pm15.94$ \\
\midrule
Temperature & $\tau=0.90$ & $89.37\pm1.27$ & $57.53\pm2.02$ & $97.31\pm0.49$ \\
Temperature & $\tau=1.16$ & $88.47\pm0.15$ & $56.80\pm0.72$ & $97.03\pm1.08$ \\
Temperature & $\tau=1.26$ & $88.27\pm1.33$ & $57.67\pm3.07$ & $96.41\pm1.30$ \\
Temperature & $\tau=1.50$ & $59.83\pm6.66$ & $56.13\pm1.84$ & $37.42\pm13.86$ \\
\bottomrule
\end{tabular}
\end{table}

\paragraph{Additional ablation result.}
Table~\ref{tab:ablation_results} provides the exact values corresponding to all variants in Figure~\ref{fig:gsm8k_ablation}. All variants use the same downstream-distillation and evaluation protocol as the main experiments. The table includes both joint objective/support removals and individual removals of $\mathcal{L}_{\mathrm{base}}$, virtual ascent, and the conflict-aware update. Fluency is reported where available. At comparable 65--67\% teacher accuracy, full CALA achieves 34.2\% student accuracy, compared with 41.0\% without virtual ascent and 42.0\% without the conflict-aware update.

\begin{table}[H]
\centering
\small
\setlength{\tabcolsep}{4pt}
\caption{Exact GSM8K ablation results. Teacher fluency, teacher accuracy, and student accuracy are reported in percentage points. $\Delta=\mathrm{Acc}_{\mathrm{teacher}}-\mathrm{Acc}_{\mathrm{student}}$; dashes denote unavailable fluency measurements.}
\label{tab:ablation_results}
\begin{tabular}{llcccc}
\toprule
Variant & Config. $(\tau,\delta)$ & Teacher Fluency & Teacher Acc. & Student Acc. & $\Delta$ \\
\midrule
Full objective & $(0.50,0.25)$ & 97.30 & 89.7 & 55.5 & 34.2 \\
Full objective & $(0.75,0.75)$ & 93.05 & 83.5 & 50.8 & 32.7 \\
Full objective & $(0.50,0.80)$ & 93.68 & 79.2 & 47.3 & 31.9 \\
Full objective & $(0.50,0.85)$ & 85.63 & 65.1 & 34.2 & 30.9 \\
\midrule
w/o $\mathcal{L}_{\mathrm{mid}}$ & $(0.50,0.25)$ & 97.19 & 89.8 & 57.1 & 32.7 \\
w/o $\mathcal{L}_{\mathrm{mid}}$ & $(0.90,0.25)$ & 90.78 & 88.4 & 54.1 & 34.3 \\
w/o $\mathcal{L}_{\mathrm{mid}}$ & $(0.50,0.80)$ & 89.31 & 84.9 & 53.5 & 31.4 \\
w/o $\mathcal{L}_{\mathrm{mid}}$ & $(0.90,1.00)$ & 84.33 & 63.0 & 38.0 & 25.0 \\
\midrule
w/o $\mathcal{L}_{\mathrm{mid}}$ + top-$k$ & $(0.50,0.25)$ & 97.33 & 88.3 & 56.2 & 32.1 \\
w/o $\mathcal{L}_{\mathrm{mid}}$ + top-$k$ & $(0.75,0.75)$ & 94.14 & 86.8 & 55.3 & 31.5 \\
w/o $\mathcal{L}_{\mathrm{mid}}$ + top-$k$ & $(0.90,0.90)$ & 74.42 & 76.0 & 52.7 & 23.3 \\
w/o $\mathcal{L}_{\mathrm{mid}}$ + top-$k$ & $(0.90,1.00)$ & 57.25 & 60.7 & 47.9 & 12.8 \\
\midrule
w/o entropy weighting + $\mathcal{L}_{\mathrm{mid}}$ + top-$k$ & $(0.50,0.25)$ & -- & 88.6 & 57.1 & 31.5 \\
w/o entropy weighting + $\mathcal{L}_{\mathrm{mid}}$ + top-$k$ & $(0.75,0.75)$ & -- & 85.9 & 56.0 & 29.9 \\
w/o entropy weighting + $\mathcal{L}_{\mathrm{mid}}$ + top-$k$ & $(1.00,0.75)$ & -- & 83.2 & 56.2 & 27.0 \\
w/o entropy weighting + $\mathcal{L}_{\mathrm{mid}}$ + top-$k$ & $(0.90,1.00)$ & -- & 71.0 & 51.8 & 19.2 \\
\midrule
w/o $\mathcal{L}_{\mathrm{base}}$ & $(0.50,0.25)$ & -- & 88.1 & 56.3 & 31.8 \\
w/o $\mathcal{L}_{\mathrm{base}}$ & $(0.75,0.75)$ & -- & 70.1 & 52.1 & 18.0 \\
w/o $\mathcal{L}_{\mathrm{base}}$ & $(0.50,0.80)$ & -- & 50.9 & 40.5 & 10.4 \\
w/o $\mathcal{L}_{\mathrm{base}}$ & $(0.50,0.85)$ & -- & 26.6 & 28.8 & $-2.2$ \\
\midrule
w/o virtual ascent & $(0.50,0.25)$ & -- & 88.0 & 58.1 & 29.9 \\
w/o virtual ascent & $(0.75,0.75)$ & -- & 83.0 & 53.3 & 29.7 \\
w/o virtual ascent & $(0.50,0.80)$ & -- & 77.7 & 46.2 & 31.5 \\
w/o virtual ascent & $(0.50,0.85)$ & -- & 66.3 & 41.0 & 25.3 \\
\midrule
w/o conflict-aware update & $(0.50,0.25)$ & -- & 88.6 & 55.2 & 33.4 \\
w/o conflict-aware update & $(0.75,0.75)$ & -- & 66.6 & 42.0 & 24.6 \\
w/o conflict-aware update & $(0.50,0.80)$ & -- & 50.6 & 32.4 & 18.2 \\
w/o conflict-aware update & $(0.50,0.85)$ & -- & 33.3 & 18.1 & 15.2 \\
\bottomrule
\end{tabular}
\end{table}

\paragraph{Manual reasoning-quality audit.}
We manually inspected 100 traces from 50 randomly sampled GSM8K problems (seed 42), using the same problems for CALA and ADS at matched teacher accuracy of approximately 79\%. Method labels were hidden and traces were shuffled before annotation. We marked whether all substantive steps were correct, whether the final answer followed from the displayed rationale, and whether the rationale was sufficiently complete to verify. A trace was strictly valid only when it satisfied all three criteria.

\begin{table}[H]
\centering
\small
\setlength{\tabcolsep}{4pt}
\caption{Blinded manual audit of GSM8K reasoning traces. ``Correct but invalid'' denotes a correct final answer with a rationale that fails at least one strict-validity criterion.}
\label{tab:manual_audit}
\begin{tabular}{lccccc}
\toprule
Method & Step-correct & Consistent & Complete & Strictly valid & Correct but invalid \\
\midrule
CALA & 36/50 & 45/50 & 47/50 & 36/50 (72\%) & 6/42 \\
ADS  & 31/50 & 42/50 & 45/50 & 31/50 (62\%) & 9/40 \\
\bottomrule
\end{tabular}
\end{table}

In the paired breakdown, both methods were valid on 25 problems, only CALA was valid on 11, only ADS was valid on 6, and neither was valid on 8. The exact McNemar test gives $p=0.332$. The sample therefore does not establish a statistically significant difference; within this audit, we did not observe a higher invalid-rationale rate for CALA. This audit assesses observable correctness, answer consistency, and completeness; it does not establish mechanistic faithfulness to the teacher's internal computation.

\paragraph{Cross-task distribution shift.}
We train the adapter only on MetaMathQA, distill the downstream student from MMLU traces, and evaluate both teacher and student on GPQA without training either component on GPQA examples. Table~\ref{tab:gpqa_shift} reports teacher/student accuracy, and Figure~\ref{fig:gpqa_shift} visualizes the corresponding utility-distillability operating points.

\begin{table}[htbp]
\centering
\small
\setlength{\tabcolsep}{5pt}
\caption{GPQA results under the MetaMathQA$\rightarrow$MMLU-traces$\rightarrow$GPQA distribution shift.}
\label{tab:gpqa_shift}
\begin{tabular}{lcc}
\toprule
Method & Configuration & Teacher/student acc. \\
\midrule
CALA & $(\tau,\delta)=(0.50,0.25)$ & 39.8/26.2 \\
CALA & $(\tau,\delta)=(0.75,0.75)$ & 37.9/27.2 \\
CALA & $(\tau,\delta)=(0.50,0.80)$ & 36.9/23.3 \\
CALA & $(\tau,\delta)=(0.50,0.85)$ & 23.3/19.4 \\
\midrule
ADS & $(\tau,\lambda_{\mathrm{ADS}})=(0.90,1.58{\times}10^{-2})$ & 36.9/23.1 \\
ADS & $(\tau,\lambda_{\mathrm{ADS}})=(0.90,2.37{\times}10^{-2})$ & 35.0/27.5 \\
ADS & $(\tau,\lambda_{\mathrm{ADS}})=(0.90,3.16{\times}10^{-2})$ & 32.0/29.6 \\
ADS & $(\tau,\lambda_{\mathrm{ADS}})=(1.00,3.95{\times}10^{-2})$ & 37.9/31.2 \\
\midrule
Temperature & $\tau=0.90$ & 33.0/31.1 \\
Temperature & $\tau=1.16$ & 36.9/25.2 \\
Temperature & $\tau=1.26$ & 40.3/28.2 \\
Temperature & $\tau=1.50$ & 32.0/22.4 \\
\bottomrule
\end{tabular}
\end{table}

\begin{figure}[htbp]
    \centering
    \includegraphics[width=0.76\linewidth]{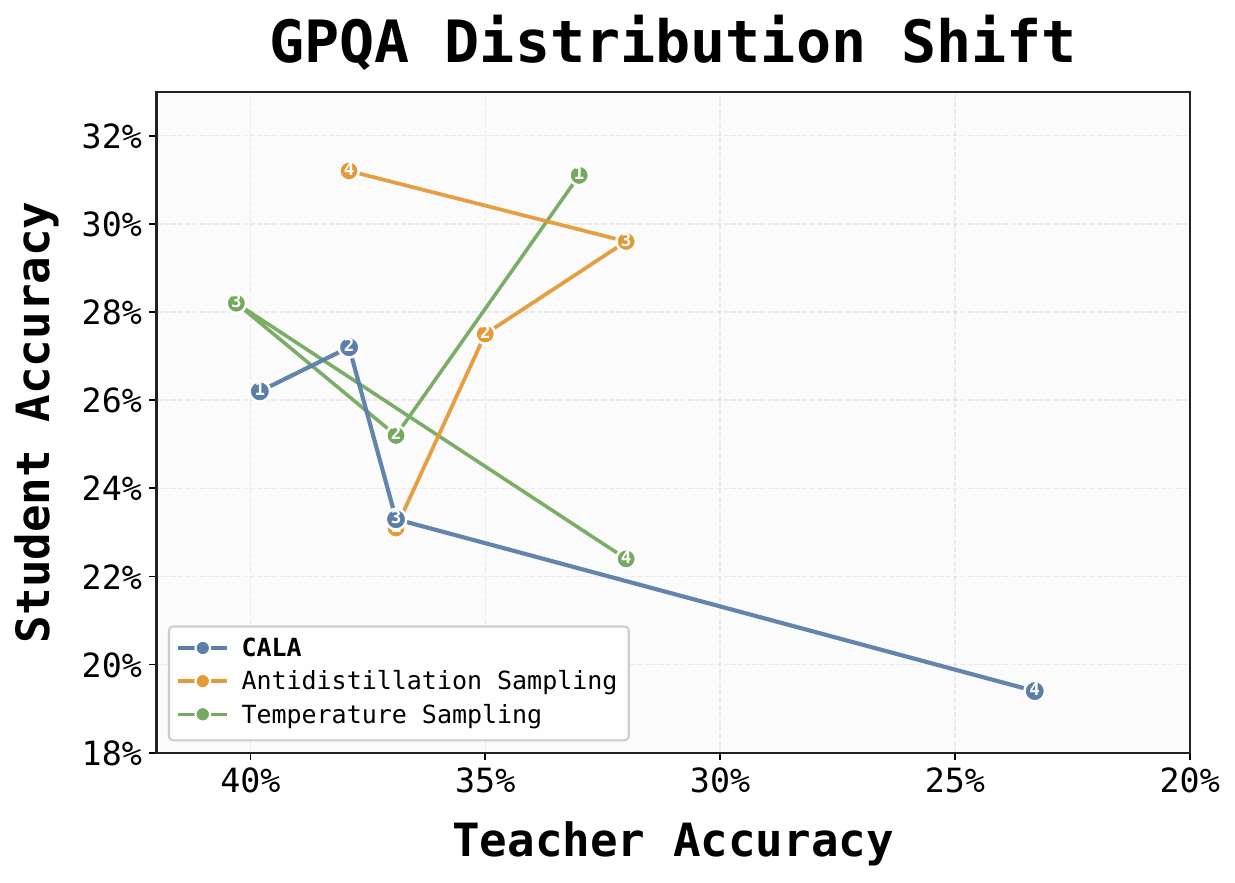}
    \caption{Utility-distillability operating points under the MetaMathQA$\rightarrow$MMLU-traces$\rightarrow$GPQA distribution shift. Point numbers follow the configuration order in Table~\ref{tab:gpqa_shift}; lower student accuracy at comparable teacher accuracy indicates stronger protection.}
    \label{fig:gpqa_shift}
\end{figure}

At exactly matched 37.9\% teacher accuracy, CALA reduces student accuracy from 31.2\% under ADS to 27.2\%. At the second exactly matched point of 36.9\% teacher accuracy, CALA and ADS are effectively tied (23.3\% versus 23.1\%). We therefore limit the transfer claim to the evaluated shift and do not claim uniform dominance over ADS.

\paragraph{Transfer across downstream students.}
We repeat the GSM8K trace-distillation protocol with Gemma-2-2B-IT and Qwen2.5-3B downstream students. Table~\ref{tab:cross_student} reports student accuracy at the same four approximately matched CALA/ADS teacher-utility points used in the main evaluation.

\begin{table}[htbp]
\centering
\small
\caption{Cross-student transfer on GSM8K. Each pair is CALA/ADS accuracy in percentage points. Lower student accuracy is better for the defense.}
\label{tab:cross_student}
\begin{tabular}{ccc}
\toprule
Teacher acc. & Gemma-2-2B-IT & Qwen2.5-3B \\
\midrule
89.7/89.7 & 35.5/39.1 & 78.6/77.7 \\
83.5/84.0 & 32.5/38.1 & 73.9/75.1 \\
79.2/78.9 & 28.6/35.1 & 71.8/68.2 \\
65.1/64.8 & 21.5/29.7 & 67.7/54.3 \\
\bottomrule
\end{tabular}
\end{table}

CALA yields lower Gemma accuracy at all four points. For Qwen2.5-3B, the methods differ by only 0.9 and 1.2 points at the first two high-utility points, with one favoring each method; ADS is stronger at the two more aggressive points. Thus, transfer beyond the original Llama student is clear for Gemma but operating-point dependent for Qwen.

\paragraph{Additional proxy-sensitivity result.}
Figure~\ref{fig:gsm8k_proxy_sensitivity} reports a proxy-size sensitivity study on GSM8K by replacing the default 1.5B training-time proxy with a 3B proxy.

\begin{figure}[htbp]
    \centering
    \includegraphics[width=0.73\linewidth]{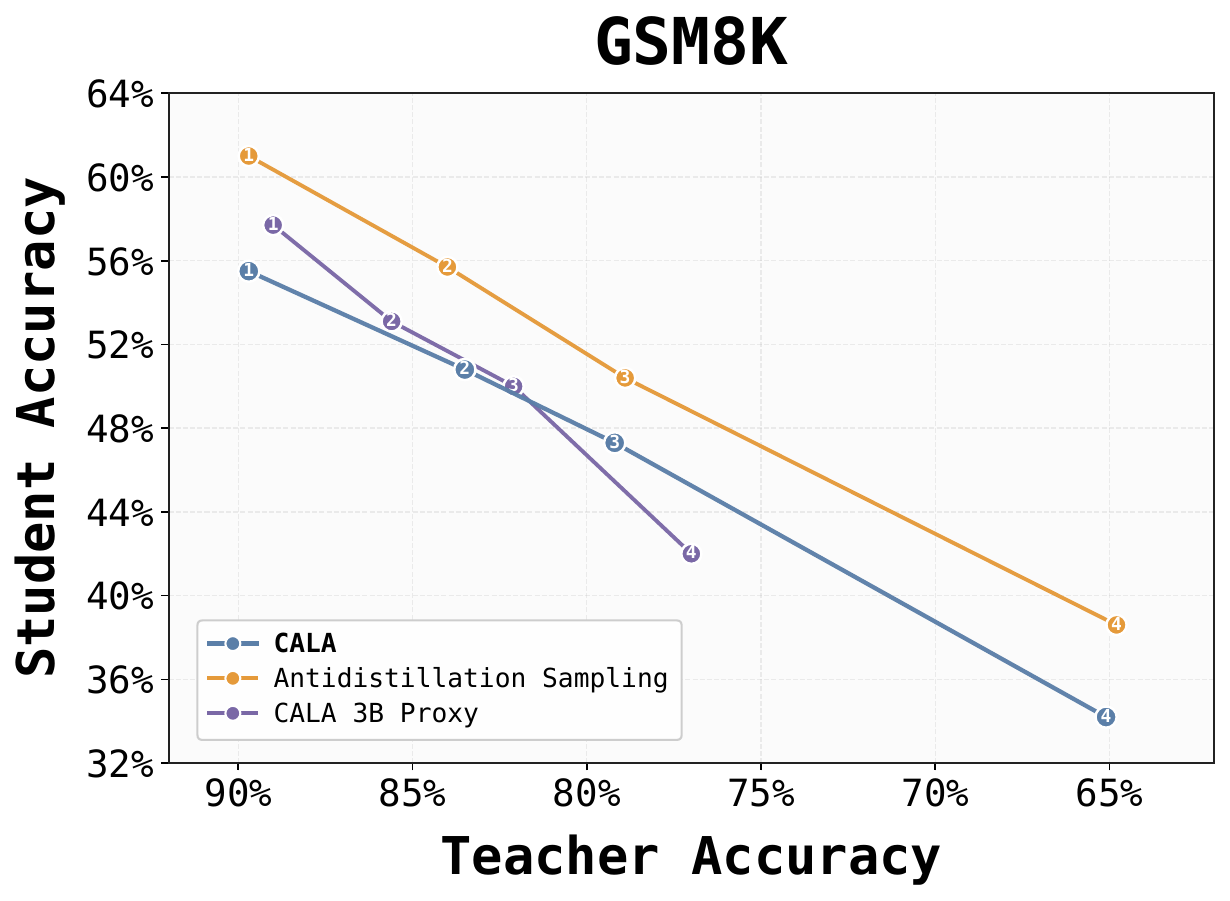}
    \caption{Proxy-size sensitivity on GSM8K. We compare the default 1.5B training-time proxy with a 3B proxy.}
    \label{fig:gsm8k_proxy_sensitivity}
    \vspace{-1em}
\end{figure}

\paragraph{Inference-strength control and DOGe comparison.}
Holding $\tau=0.5$ fixed, increasing $\delta$ from 0.25 to 0.80 and 0.85 changes teacher/student accuracy from 89.7/55.5 to 79.2/47.3 and 65.1/34.2, respectively. This isolates the expected monotone strengthening of protection together with a nonlinear loss of teacher utility.

We also reproduced DOGe~\citep{li2025doge} with its official implementation and tuned its main hyperparameters. The result was highly checkpoint-sensitive: some runs collapsed to approximately 31\% teacher accuracy, while usable checkpoints reached 82.93--86.87\% teacher accuracy and 55.03--57.03\% student accuracy. These operating points give a weaker trade-off than CALA in our setup. Unlike CALA's inference-time control through $\delta$, changing DOGe's defense strength requires retraining. Because the available DOGe runs are checkpoint-dependent and do not form matched operating points, we report them as an additional comparison rather than adding them to the main Pareto curves.

\section{Limitations \& Broader Impacts}
\label{app:limit}
\textbf{Limitations. }Our evaluation focuses on a specific trace-distillation setting in which an attacker trains a smaller student model from teacher-generated reasoning traces using standard next-token prediction. The results should therefore not be interpreted as protection against all forms of model extraction or adaptive distillation. We evaluate transfer to two additional students, one cross-task shift, and a PPL-based filter, but do not evaluate paraphrasing, verifier-selected traces, larger training budgets, or arbitrary auxiliary data. The method also depends on a training-time proxy student; the proxy-size study uses a compatible token space, and supporting tokenizer-mismatched proxies would require cross-tokenizer probability mapping or a sequence-level surrogate. Empirically, the defense is most consistent on structured reasoning tasks such as GSM8K and MATH, while the gains on MMLU and some aggressive settings are more mixed. Stronger defense settings can also reduce teacher accuracy or fluency, so the operating point should be chosen with the intended use case in mind.

\textbf{Broader Impacts. }This work may help reduce the risk that useful reasoning traces become direct supervision for model imitation. At the same time, methods that modify reasoning traces should be applied carefully. If the defense is too strong, it may reduce readability, make intermediate reasoning less helpful, or affect user trust. We therefore view the proposed adapter as one component of a broader deployment strategy, alongside evaluation of trace quality, access controls, and monitoring for misuse.

\section{Fluency Judge Details}
\label{app:fluency_judge}
We evaluate surface-level fluency with GPT-4o mini. For each operating point, we sample 100 generated traces and ask the judge to assign a score from 1 to 100. The reported fluency score is the mean over all valid judged samples.

The judge receives the original problem and generated trace, but is instructed to ignore correctness and the final answer. It evaluates only surface quality: coherence, readability, natural English, absence of meaningless repetition, and absence of artifacts such as garbled fragments, mixed-language degeneration, or control tokens. Before judging, we remove model-specific control tokens and assistant delimiters, while preserving mathematical notation and raw \LaTeX{} commands and instructing the judge not to penalize them.

The prompt used for the automatic judge is:

\begin{tcolorbox}[
title=Fluency judge prompt,
colback=gray!3,
colframe=gray!45,
fonttitle=\bfseries,
breakable,
arc=1mm,
boxrule=0.4pt
]
Evaluate the ``Readability, Fluency, and Cleanliness'' of an AI's raw output.

\textbf{Context.} You are evaluating a raw text draft. Do not assume an everyday-user perspective. The text is expected to contain raw markup.

\textbf{Absolute \texttt{LaTeX} immunity.} Never penalize LaTeX syntax, such as \verb|\frac|, \verb|\boxed|, \verb|\text|, \verb|\[|, \verb|\]|, or other backslash commands. Treat them as valid text.

Your ONLY focus for penalties should be:
\begin{enumerate}
    \item \textbf{Literal Token Loops \& Exact Block Repetition:} Strongly penalize mechanical stuttering (e.g., `the the the the') or copy-pasting the EXACT same sentence/paragraph multiple times.
    \item \textbf{Real Garbage Characters:} Broken Unicode, unreadable encoding errors, or random non-alphanumeric noise (excluding math syntax).
\end{enumerate}

\textbf{SCORING RUBRIC}
\begin{itemize}
    \item \textbf{90-100:} Clear logic, coherent, zero meaningless repetition, zero garbage characters. (Raw LaTeX presence is completely ignored and gets full marks, Must be exclusively in English).
    \item \textbf{75-89:} Coherent, might have minor awkward phrasing, but strictly zero meaningless repetition.
    \item \textbf{50-74:} Noticeable flow issues or slight repetitive language.
    \item \textbf{25-49:} Moderate meaningless repetition, lacks useful substance, or contains noticeable garbage characters.
    \item \textbf{1-24:} Auto-Fail: Contains non-English characters/languages (e.g., Chinese, Thai, Arabic), severe repetitive token loops, overwhelmed by unreadable garbage characters, or completely broken text.
\end{itemize}

\textbf{SCORING GRANULARITY (CRITICAL)}

AVOID defaulting to round numbers or multiples of 5 (e.g., 50, 70, 75, 80, 90, 100). You MUST utilize the entire spectrum of integers (e.g., 67, 73, 84, 91, 98).

METHOD: Start with a base score of 100. Apply precise, non-round point deductions for each specific error identified (e.g., deduct 3 points for minor awkwardness, deduct 17 points for a token loop).

\textbf{REQUIRED EVALUATION STEPS}
\begin{itemize}
    \item \textbf{Step1: Strict Language Scan} You MUST first scan the text for ANY non-English characters (e.g., Chinese, Japanese, Korean, Arabic, Thai).

    \item \textbf{Step 2: Token Loop Scan.} Check for mechanical stuttering.

    \item \textbf{Step 3: Flow \& Reasoning.} Evaluate formatting and repetition.
\end{itemize}

Rules: Reason step-by-step, then strictly output `Rating: [[rating]]' (1 to 100) (for example: ``Rating: [[90]]'')
\end{tcolorbox}

We use the same judging protocol for our method and the ADS baseline. The fluency judge is used only as a diagnostic for surface-level degeneration; it is not used for training, model selection, or correctness evaluation.

\section{Example Traces}
\label{app:samples}
Qualitative GSM8K examples selected to illustrate a surface-degeneration failure mode observed under stronger ADS settings.
\begin{figure}[H]
    \centering
    \includegraphics[width=\linewidth]{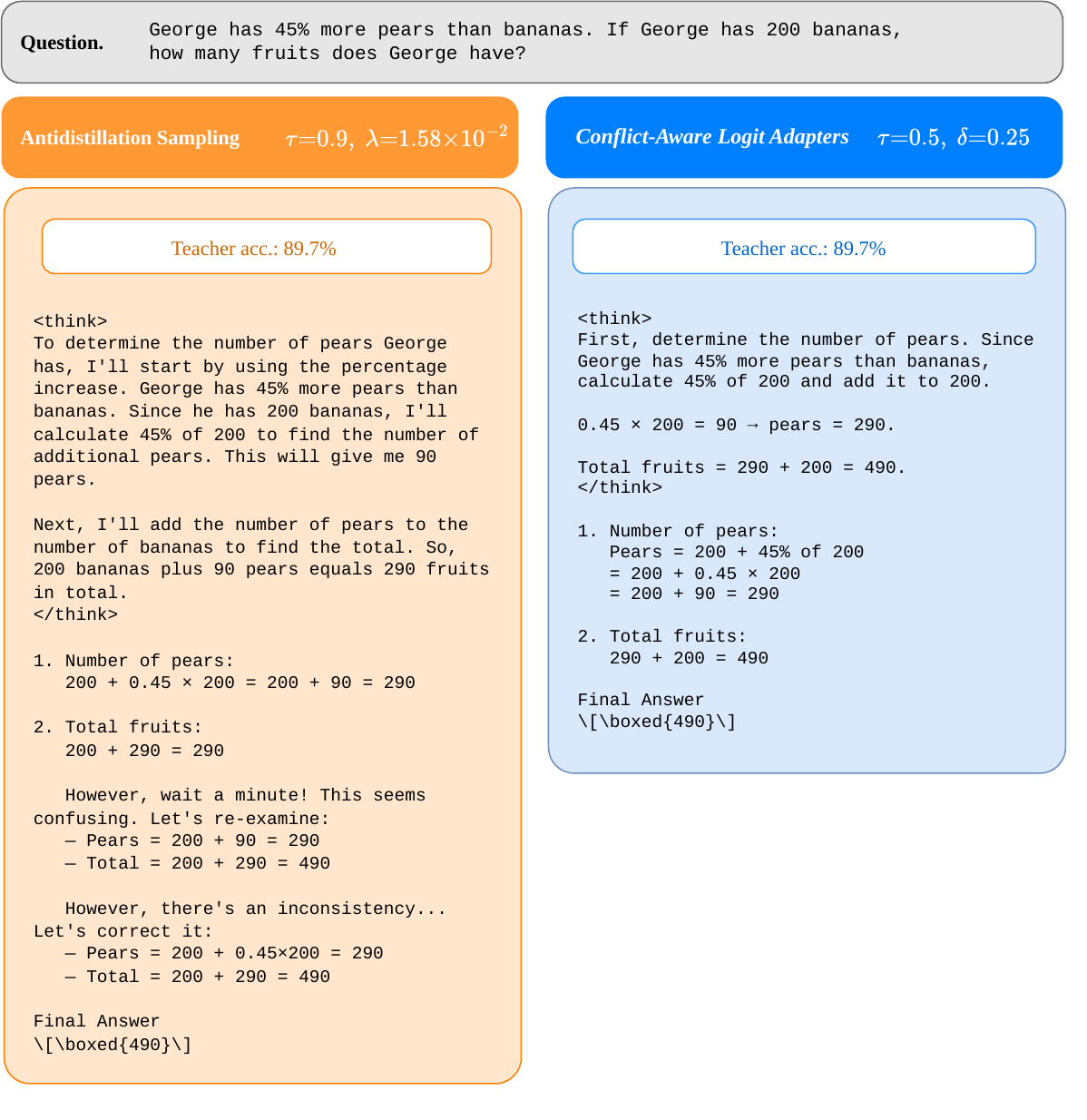}

\end{figure}

\begin{figure}[H]
    \centering
    \includegraphics[width=\linewidth]{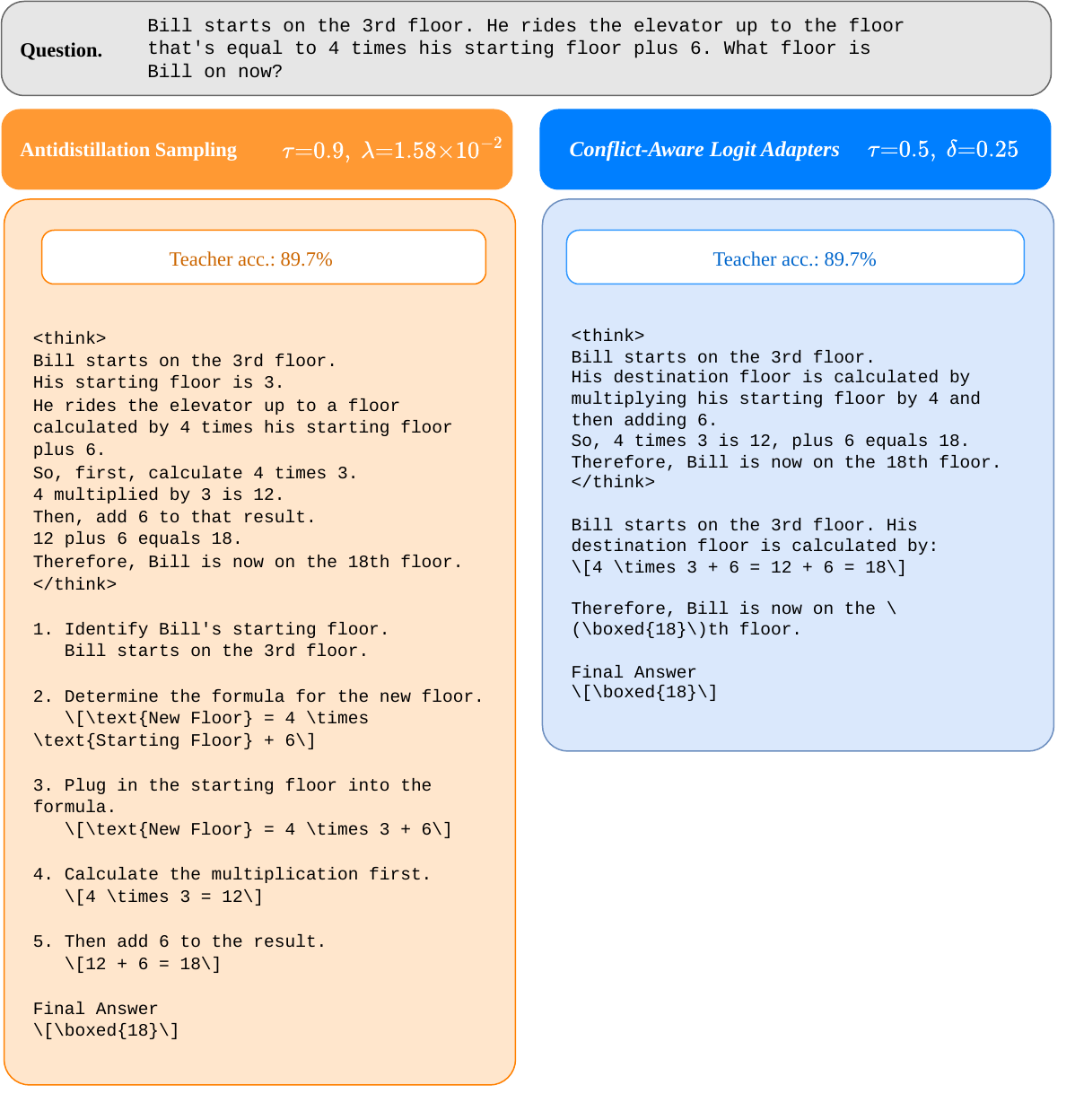}

\end{figure}

\begin{figure}[H]
    \centering
    \includegraphics[width=\linewidth]{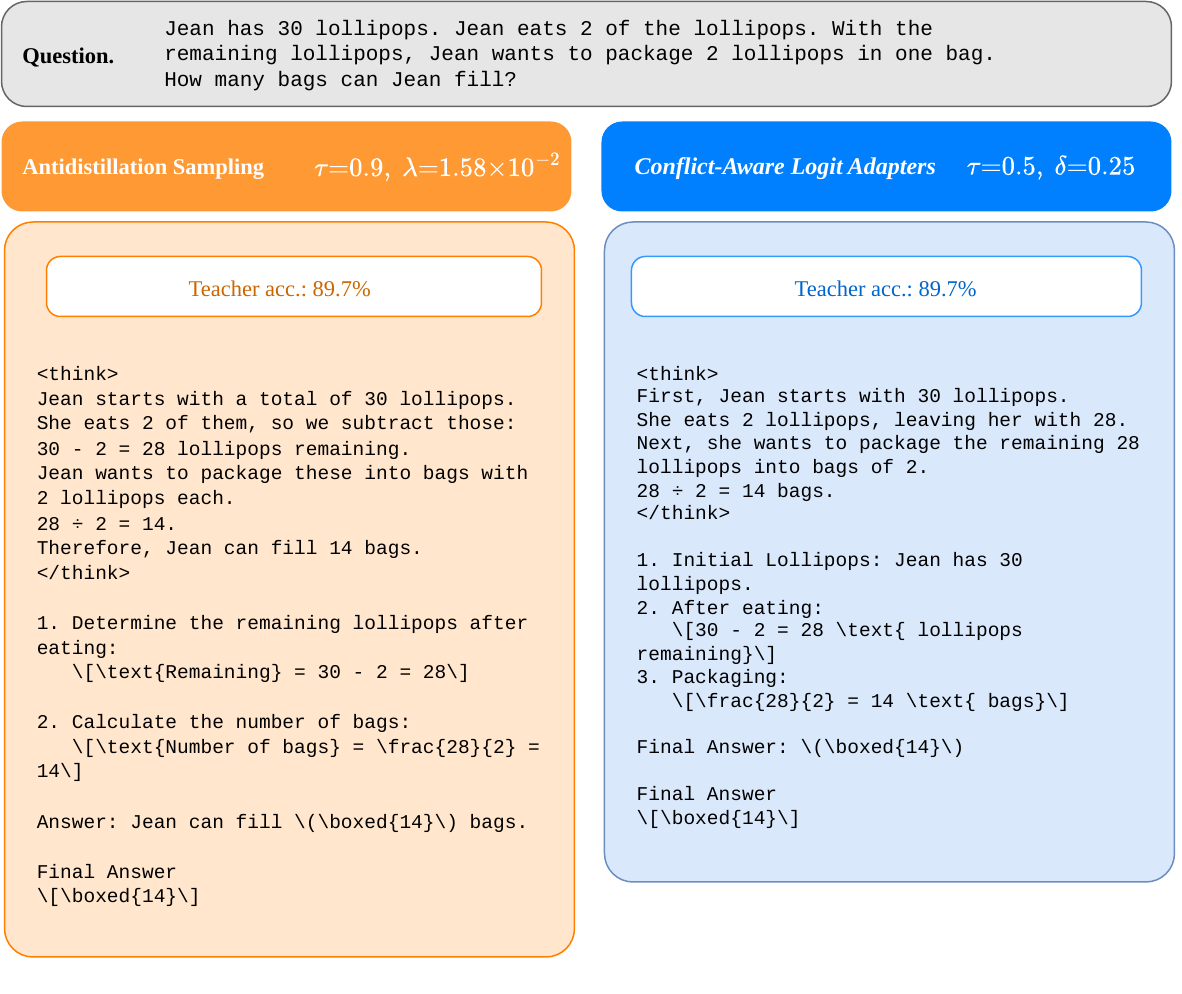}

\end{figure}

\begin{figure}[H]
    \centering
    \includegraphics[width=\linewidth]{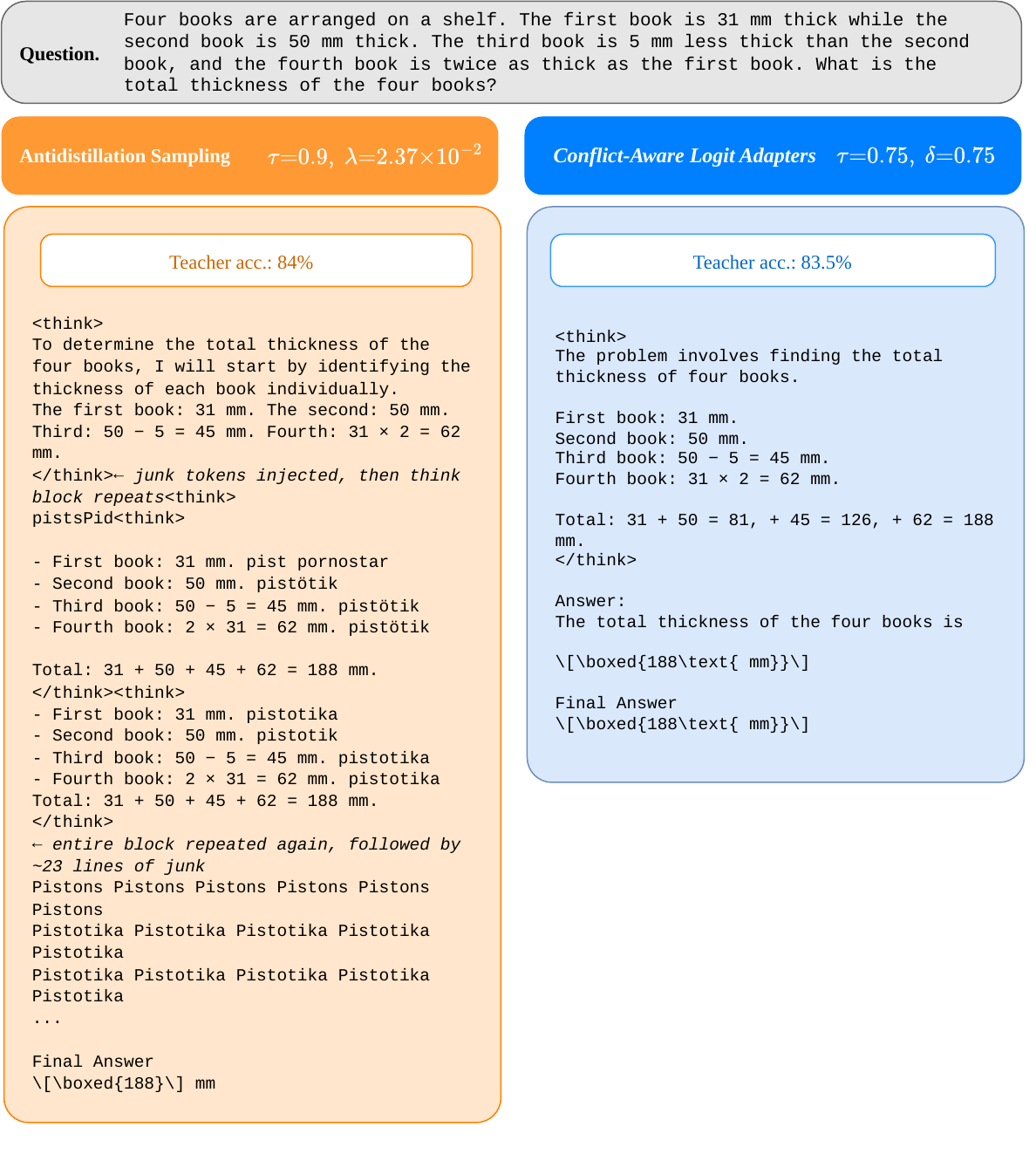}

\end{figure}

\begin{figure}[H]
    \centering
    \includegraphics[width=\linewidth]{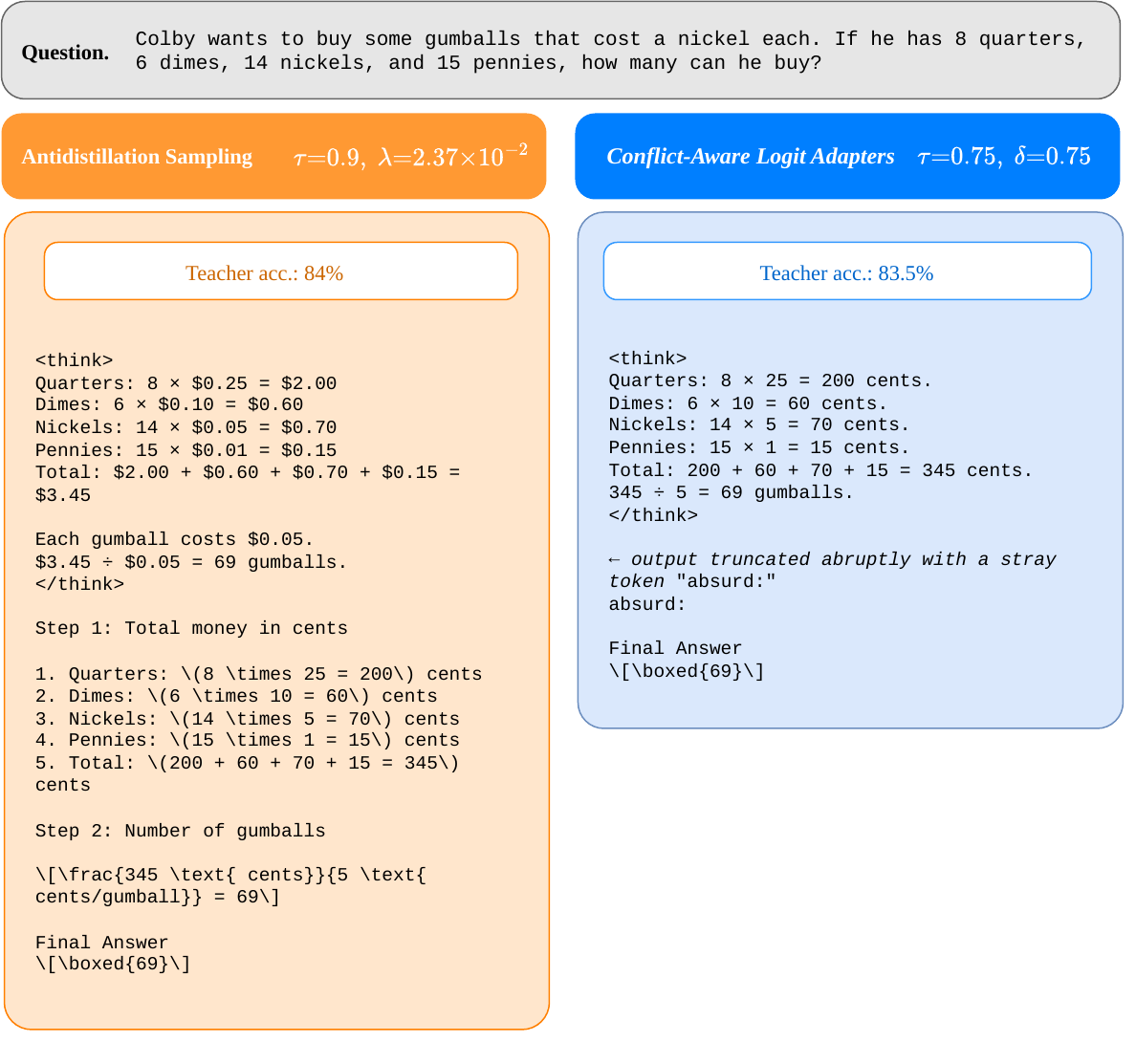}

\end{figure}

\begin{figure}[H]
    \centering
    \includegraphics[width=\linewidth]{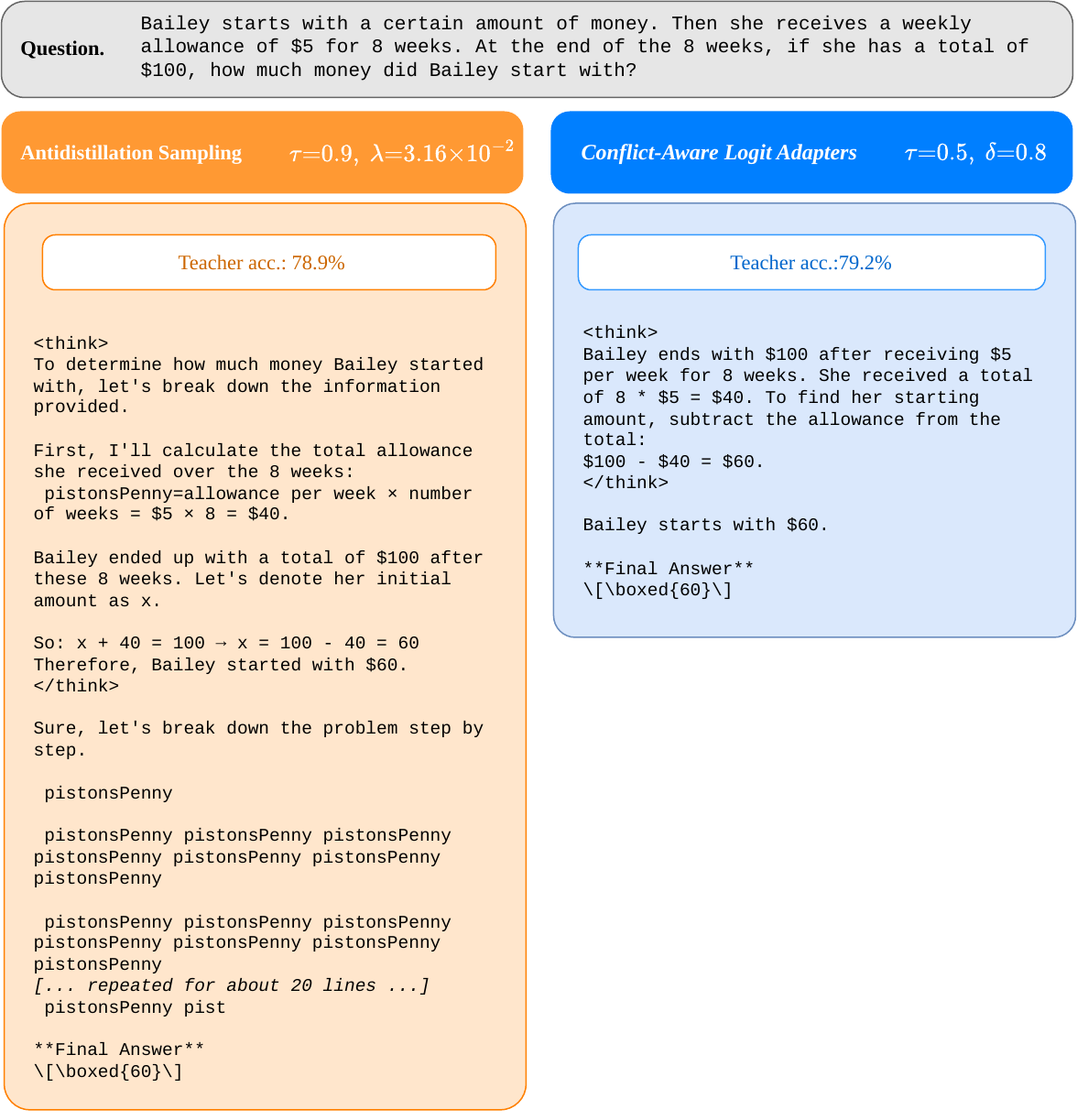}

\end{figure}

\begin{figure}[H]
    \centering
    \includegraphics[width=\linewidth]{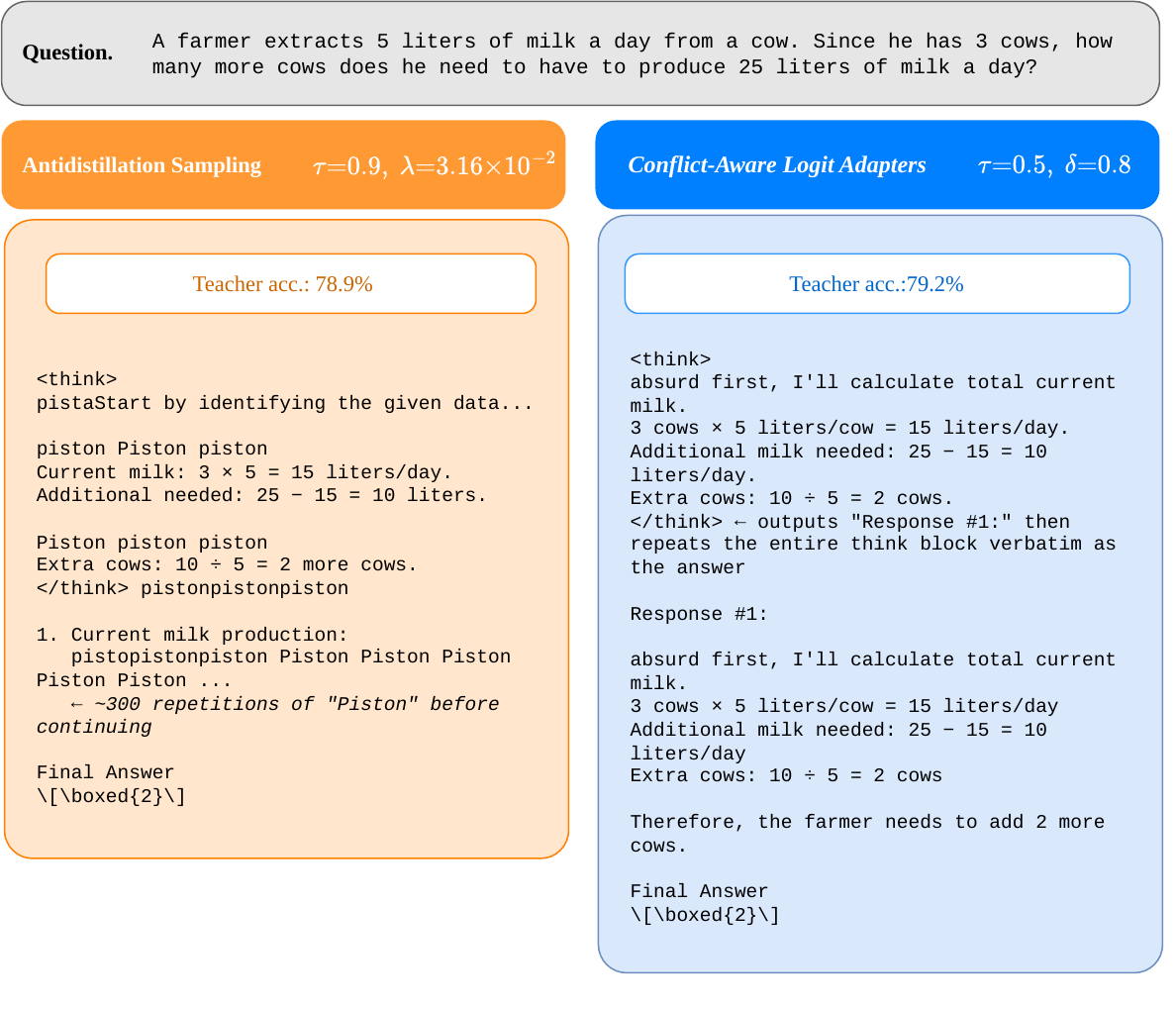}

\end{figure}

\begin{figure}[H]
    \centering
    \includegraphics[width=\linewidth]{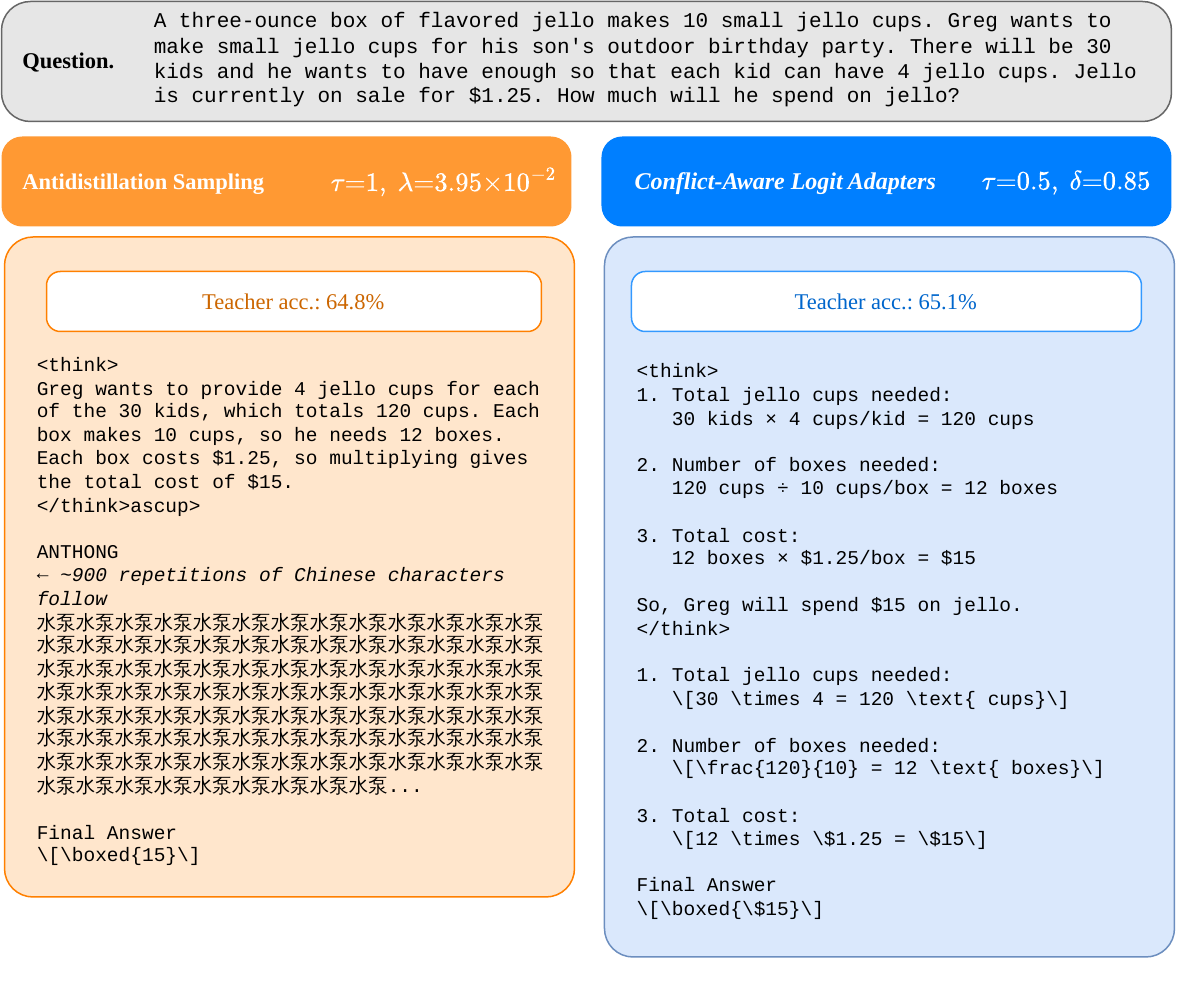}

\end{figure}

\begin{figure}[H]
    \centering
    \includegraphics[width=\linewidth]{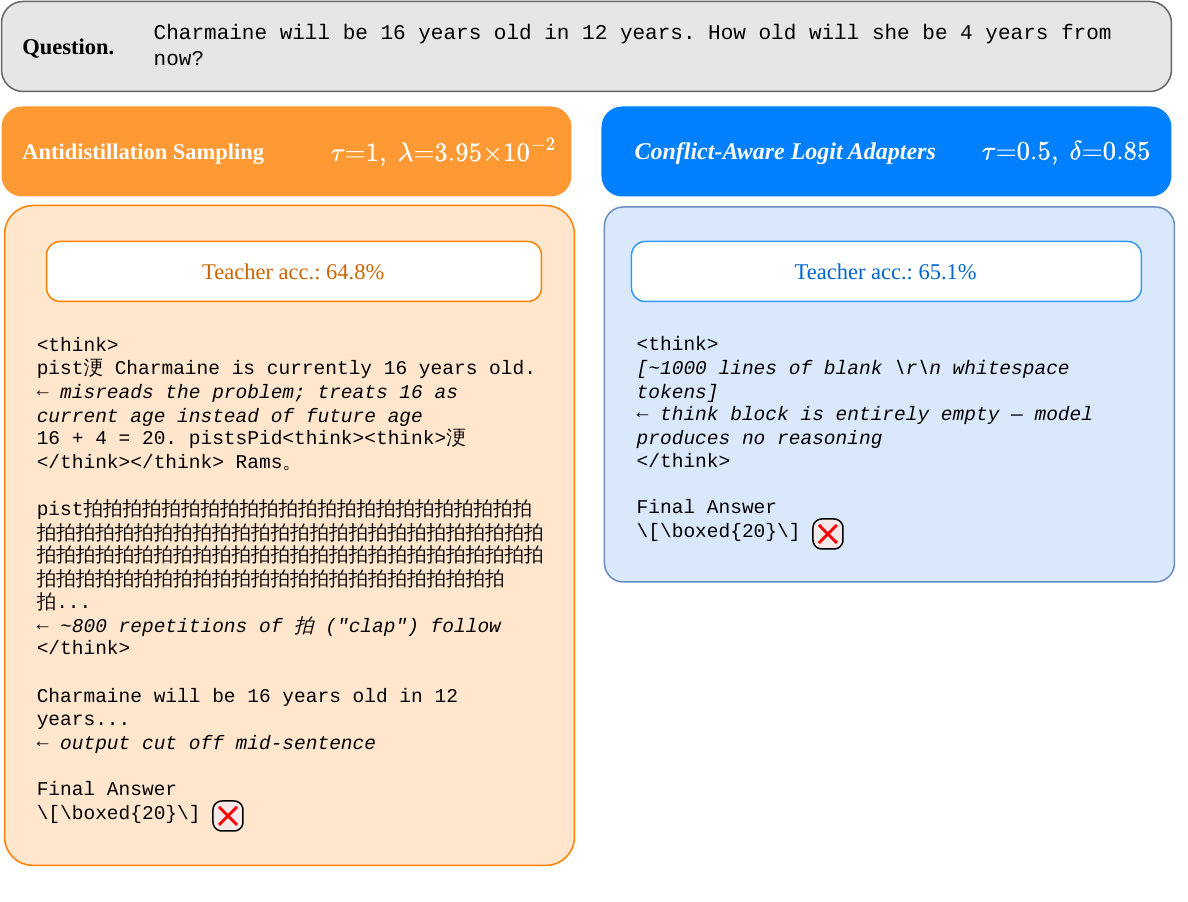}
    
\end{figure}

\newpage
\section*{NeurIPS Paper Checklist}

\begin{enumerate}

\item {\bf Claims}
    \item[] Question: Do the main claims made in the abstract and introduction accurately reflect the paper's contributions and scope?
    \item[] Answer: \answerYes{} 
    \item[] Justification: The abstract and introduction state the proposed method, assumptions, evaluation scope, and limitations.
    \item[] Guidelines:
    \begin{itemize}
        \item The answer \answerNA{} means that the abstract and introduction do not include the claims made in the paper.
        \item The abstract and/or introduction should clearly state the claims made, including the contributions made in the paper and important assumptions and limitations. A \answerNo{} or \answerNA{} answer to this question will not be perceived well by the reviewers. 
        \item The claims made should match theoretical and experimental results, and reflect how much the results can be expected to generalize to other settings. 
        \item It is fine to include aspirational goals as motivation as long as it is clear that these goals are not attained by the paper. 
    \end{itemize}

\item {\bf Limitations}
    \item[] Question: Does the paper discuss the limitations of the work performed by the authors?
    \item[] Answer: \answerYes{}{} 
    \item[] Justification: The paper discusses limitations in the conclusion and Appendix ~\ref{app:limit}.
    \item[] Guidelines:
    \begin{itemize}
        \item The answer \answerNA{} means that the paper has no limitation while the answer \answerNo{} means that the paper has limitations, but those are not discussed in the paper. 
        \item The authors are encouraged to create a separate ``Limitations'' section in their paper.
        \item The paper should point out any strong assumptions and how robust the results are to violations of these assumptions (e.g., independence assumptions, noiseless settings, model well-specification, asymptotic approximations only holding locally). The authors should reflect on how these assumptions might be violated in practice and what the implications would be.
        \item The authors should reflect on the scope of the claims made, e.g., if the approach was only tested on a few datasets or with a few runs. In general, empirical results often depend on implicit assumptions, which should be articulated.
        \item The authors should reflect on the factors that influence the performance of the approach. For example, a facial recognition algorithm may perform poorly when image resolution is low or images are taken in low lighting. Or a speech-to-text system might not be used reliably to provide closed captions for online lectures because it fails to handle technical jargon.
        \item The authors should discuss the computational efficiency of the proposed algorithms and how they scale with dataset size.
        \item If applicable, the authors should discuss possible limitations of their approach to address problems of privacy and fairness.
        \item While the authors might fear that complete honesty about limitations might be used by reviewers as grounds for rejection, a worse outcome might be that reviewers discover limitations that aren't acknowledged in the paper. The authors should use their best judgment and recognize that individual actions in favor of transparency play an important role in developing norms that preserve the integrity of the community. Reviewers will be specifically instructed to not penalize honesty concerning limitations.
    \end{itemize}

\item {\bf Theory assumptions and proofs}
    \item[] Question: For each theoretical result, does the paper provide the full set of assumptions and a complete (and correct) proof?
    \item[] Answer: \answerYes{} 
    \item[] Justification: Appendix ~\ref{app:conflict_aware_analysis} provides the assumptions, theorem statement, and proof for the conflict-aware update.
    \item[] Guidelines:
    \begin{itemize}
        \item The answer \answerNA{} means that the paper does not include theoretical results. 
        \item All the theorems, formulas, and proofs in the paper should be numbered and cross-referenced.
        \item All assumptions should be clearly stated or referenced in the statement of any theorems.
        \item The proofs can either appear in the main paper or the supplemental material, but if they appear in the supplemental material, the authors are encouraged to provide a short proof sketch to provide intuition. 
        \item Inversely, any informal proof provided in the core of the paper should be complemented by formal proofs provided in appendix or supplemental material.
        \item Theorems and Lemmas that the proof relies upon should be properly referenced. 
    \end{itemize}

    \item {\bf Experimental result reproducibility}
    \item[] Question: Does the paper fully disclose all the information needed to reproduce the main experimental results of the paper to the extent that it affects the main claims and/or conclusions of the paper (regardless of whether the code and data are provided or not)?
    \item[] Answer: \answerYes{} 
    \item[] Justification: See section~\ref{sec:experiments} and Appendix~\ref{app:hyperparameters} and~\ref{app:fluency_judge} for relevant details.
    \item[] Guidelines:
    \begin{itemize}
        \item The answer \answerNA{} means that the paper does not include experiments.
        \item If the paper includes experiments, a \answerNo{} answer to this question will not be perceived well by the reviewers: Making the paper reproducible is important, regardless of whether the code and data are provided or not.
        \item If the contribution is a dataset and\slash or model, the authors should describe the steps taken to make their results reproducible or verifiable. 
        \item Depending on the contribution, reproducibility can be accomplished in various ways. For example, if the contribution is a novel architecture, describing the architecture fully might suffice, or if the contribution is a specific model and empirical evaluation, it may be necessary to either make it possible for others to replicate the model with the same dataset, or provide access to the model. In general. releasing code and data is often one good way to accomplish this, but reproducibility can also be provided via detailed instructions for how to replicate the results, access to a hosted model (e.g., in the case of a large language model), releasing of a model checkpoint, or other means that are appropriate to the research performed.
        \item While NeurIPS does not require releasing code, the conference does require all submissions to provide some reasonable avenue for reproducibility, which may depend on the nature of the contribution. For example
        \begin{enumerate}
            \item If the contribution is primarily a new algorithm, the paper should make it clear how to reproduce that algorithm.
            \item If the contribution is primarily a new model architecture, the paper should describe the architecture clearly and fully.
            \item If the contribution is a new model (e.g., a large language model), then there should either be a way to access this model for reproducing the results or a way to reproduce the model (e.g., with an open-source dataset or instructions for how to construct the dataset).
            \item We recognize that reproducibility may be tricky in some cases, in which case authors are welcome to describe the particular way they provide for reproducibility. In the case of closed-source models, it may be that access to the model is limited in some way (e.g., to registered users), but it should be possible for other researchers to have some path to reproducing or verifying the results.
        \end{enumerate}
    \end{itemize}

\item {\bf Open access to data and code}
    \item[] Question: Does the paper provide open access to the data and code, with sufficient instructions to faithfully reproduce the main experimental results, as described in supplemental material?
    \item[] Answer: \answerYes{} 
    \item[] Justification: The experiment code is provided in the supplemental material. The datasets used in the experiments are publicly available.
    \item[] Guidelines:
    \begin{itemize}
        \item The answer \answerNA{} means that paper does not include experiments requiring code.
        \item Please see the NeurIPS code and data submission guidelines (\url{https://neurips.cc/public/guides/CodeSubmissionPolicy}) for more details.
        \item While we encourage the release of code and data, we understand that this might not be possible, so \answerNo{} is an acceptable answer. Papers cannot be rejected simply for not including code, unless this is central to the contribution (e.g., for a new open-source benchmark).
        \item The instructions should contain the exact command and environment needed to run to reproduce the results. See the NeurIPS code and data submission guidelines (\url{https://neurips.cc/public/guides/CodeSubmissionPolicy}) for more details.
        \item The authors should provide instructions on data access and preparation, including how to access the raw data, preprocessed data, intermediate data, and generated data, etc.
        \item The authors should provide scripts to reproduce all experimental results for the new proposed method and baselines. If only a subset of experiments are reproducible, they should state which ones are omitted from the script and why.
        \item At submission time, to preserve anonymity, the authors should release anonymized versions (if applicable).
        \item Providing as much information as possible in supplemental material (appended to the paper) is recommended, but including URLs to data and code is permitted.
    \end{itemize}

\item {\bf Experimental setting/details}
    \item[] Question: Does the paper specify all the training and test details (e.g., data splits, hyperparameters, how they were chosen, type of optimizer) necessary to understand the results?
    \item[] Answer: \answerYes{} 
    \item[] Justification: Section~\ref{sec:experiments} and Appendix~\ref{app:hyperparameters} provide the training and evaluation details.
    \item[] Guidelines:
    \begin{itemize}
        \item The answer \answerNA{} means that the paper does not include experiments.
        \item The experimental setting should be presented in the core of the paper to a level of detail that is necessary to appreciate the results and make sense of them.
        \item The full details can be provided either with the code, in appendix, or as supplemental material.
    \end{itemize}

\item {\bf Experiment statistical significance}
    \item[] Question: Does the paper report error bars suitably and correctly defined or other appropriate information about the statistical significance of the experiments?
    \item[] Answer: \answerNo{} 
    \item[] Justification: Appendix~\ref{app:rebuttal_experiments} reports mean and standard deviation over three seeds for the GSM8K robustness study and an exact McNemar test for the manual audit. We retain \answerNo{} because these analyses do not cover every experiment supporting the main claims.
    \item[] Guidelines:
    \begin{itemize}
        \item The answer \answerNA{} means that the paper does not include experiments.
        \item The authors should answer \answerYes{} if the results are accompanied by error bars, confidence intervals, or statistical significance tests, at least for the experiments that support the main claims of the paper.
        \item The factors of variability that the error bars are capturing should be clearly stated (for example, train/test split, initialization, random drawing of some parameter, or overall run with given experimental conditions).
        \item The method for calculating the error bars should be explained (closed form formula, call to a library function, bootstrap, etc.)
        \item The assumptions made should be given (e.g., Normally distributed errors).
        \item It should be clear whether the error bar is the standard deviation or the standard error of the mean.
        \item It is OK to report 1-sigma error bars, but one should state it. The authors should preferably report a 2-sigma error bar than state that they have a 96\% CI, if the hypothesis of Normality of errors is not verified.
        \item For asymmetric distributions, the authors should be careful not to show in tables or figures symmetric error bars that would yield results that are out of range (e.g., negative error rates).
        \item If error bars are reported in tables or plots, the authors should explain in the text how they were calculated and reference the corresponding figures or tables in the text.
    \end{itemize}

\item {\bf Experiments compute resources}
    \item[] Question: For each experiment, does the paper provide sufficient information on the computer resources (type of compute workers, memory, time of execution) needed to reproduce the experiments?
    \item[] Answer: \answerYes{} 
    \item[] Justification: Compute costs are reported in Table~\ref{tab:inference_cost}; setup details are in Appendix~\ref{app:hyperparameters}.
    \item[] Guidelines:
    \begin{itemize}
        \item The answer \answerNA{} means that the paper does not include experiments.
        \item The paper should indicate the type of compute workers CPU or GPU, internal cluster, or cloud provider, including relevant memory and storage.
        \item The paper should provide the amount of compute required for each of the individual experimental runs as well as estimate the total compute. 
        \item The paper should disclose whether the full research project required more compute than the experiments reported in the paper (e.g., preliminary or failed experiments that didn't make it into the paper). 
    \end{itemize}
    
\item {\bf Code of ethics}
    \item[] Question: Does the research conducted in the paper conform, in every respect, with the NeurIPS Code of Ethics \url{https://neurips.cc/public/EthicsGuidelines}?
    \item[] Answer: \answerYes{} 
    \item[] Justification: This research conforms with the NeurIPS Code of Ethics.
    \item[] Guidelines:
    \begin{itemize}
        \item The answer \answerNA{} means that the authors have not reviewed the NeurIPS Code of Ethics.
        \item If the authors answer \answerNo, they should explain the special circumstances that require a deviation from the Code of Ethics.
        \item The authors should make sure to preserve anonymity (e.g., if there is a special consideration due to laws or regulations in their jurisdiction).
    \end{itemize}

\item {\bf Broader impacts}
    \item[] Question: Does the paper discuss both potential positive societal impacts and negative societal impacts of the work performed?
    \item[] Answer: \answerYes{} 
    \item[] Justification: The paper discusses the broader impacts in the conclusion and Appendix ~\ref{app:limit}.
    \item[] Guidelines:
    \begin{itemize}
        \item The answer \answerNA{} means that there is no societal impact of the work performed.
        \item If the authors answer \answerNA{} or \answerNo, they should explain why their work has no societal impact or why the paper does not address societal impact.
        \item Examples of negative societal impacts include potential malicious or unintended uses (e.g., disinformation, generating fake profiles, surveillance), fairness considerations (e.g., deployment of technologies that could make decisions that unfairly impact specific groups), privacy considerations, and security considerations.
        \item The conference expects that many papers will be foundational research and not tied to particular applications, let alone deployments. However, if there is a direct path to any negative applications, the authors should point it out. For example, it is legitimate to point out that an improvement in the quality of generative models could be used to generate Deepfakes for disinformation. On the other hand, it is not needed to point out that a generic algorithm for optimizing neural networks could enable people to train models that generate Deepfakes faster.
        \item The authors should consider possible harms that could arise when the technology is being used as intended and functioning correctly, harms that could arise when the technology is being used as intended but gives incorrect results, and harms following from (intentional or unintentional) misuse of the technology.
        \item If there are negative societal impacts, the authors could also discuss possible mitigation strategies (e.g., gated release of models, providing defenses in addition to attacks, mechanisms for monitoring misuse, mechanisms to monitor how a system learns from feedback over time, improving the efficiency and accessibility of ML).
    \end{itemize}
    
\item {\bf Safeguards}
    \item[] Question: Does the paper describe safeguards that have been put in place for responsible release of data or models that have a high risk for misuse (e.g., pre-trained language models, image generators, or scraped datasets)?
    \item[] Answer: \answerNA{} 
    \item[] Justification: The paper does not release any data or models that have a high risk for misuse.
    \item[] Guidelines:
    \begin{itemize}
        \item The answer \answerNA{} means that the paper poses no such risks.
        \item Released models that have a high risk for misuse or dual-use should be released with necessary safeguards to allow for controlled use of the model, for example by requiring that users adhere to usage guidelines or restrictions to access the model or implementing safety filters. 
        \item Datasets that have been scraped from the Internet could pose safety risks. The authors should describe how they avoided releasing unsafe images.
        \item We recognize that providing effective safeguards is challenging, and many papers do not require this, but we encourage authors to take this into account and make a best faith effort.
    \end{itemize}

\item {\bf Licenses for existing assets}
    \item[] Question: Are the creators or original owners of assets (e.g., code, data, models), used in the paper, properly credited and are the license and terms of use explicitly mentioned and properly respected?
    \item[] Answer: \answerYes{} 
    \item[] Justification: The paper credits the datasets and models used in the experiments. MetaMathQA and the evaluation datasets use the MIT License; DeepSeek-R1-Distill-Qwen-7B uses the MIT License, Qwen2.5 uses Apache 2.0, and Llama 3.2 uses the Llama 3.2 Community License.
    \item[] Guidelines:
    \begin{itemize}
        \item The answer \answerNA{} means that the paper does not use existing assets.
        \item The authors should cite the original paper that produced the code package or dataset.
        \item The authors should state which version of the asset is used and, if possible, include a URL.
        \item The name of the license (e.g., CC-BY 4.0) should be included for each asset.
        \item For scraped data from a particular source (e.g., website), the copyright and terms of service of that source should be provided.
        \item If assets are released, the license, copyright information, and terms of use in the package should be provided. For popular datasets, \url{paperswithcode.com/datasets} has curated licenses for some datasets. Their licensing guide can help determine the license of a dataset.
        \item For existing datasets that are re-packaged, both the original license and the license of the derived asset (if it has changed) should be provided.
        \item If this information is not available online, the authors are encouraged to reach out to the asset's creators.
    \end{itemize}

\item {\bf New assets}
    \item[] Question: Are new assets introduced in the paper well documented and is the documentation provided alongside the assets?
    \item[] Answer: \answerNA{} 
    \item[] Justification: The paper does not introduce a new dataset, benchmark, or model checkpoint as a primary contribution.
    \item[] Guidelines:
    \begin{itemize}
        \item The answer \answerNA{} means that the paper does not release new assets.
        \item Researchers should communicate the details of the dataset\slash code\slash model as part of their submissions via structured templates. This includes details about training, license, limitations, etc. 
        \item The paper should discuss whether and how consent was obtained from people whose asset is used.
        \item At submission time, remember to anonymize your assets (if applicable). You can either create an anonymized URL or include an anonymized zip file.
    \end{itemize}

\item {\bf Crowdsourcing and research with human subjects}
    \item[] Question: For crowdsourcing experiments and research with human subjects, does the paper include the full text of instructions given to participants and screenshots, if applicable, as well as details about compensation (if any)? 
    \item[] Answer: \answerNA{} 
    \item[] Justification: The paper does not involve crowdsourcing or research with human subjects.
    \item[] Guidelines:
    \begin{itemize}
        \item The answer \answerNA{} means that the paper does not involve crowdsourcing nor research with human subjects.
        \item Including this information in the supplemental material is fine, but if the main contribution of the paper involves human subjects, then as much detail as possible should be included in the main paper. 
        \item According to the NeurIPS Code of Ethics, workers involved in data collection, curation, or other labor should be paid at least the minimum wage in the country of the data collector. 
    \end{itemize}

\item {\bf Institutional review board (IRB) approvals or equivalent for research with human subjects}
    \item[] Question: Does the paper describe potential risks incurred by study participants, whether such risks were disclosed to the subjects, and whether Institutional Review Board (IRB) approvals (or an equivalent approval/review based on the requirements of your country or institution) were obtained?
    \item[] Answer: \answerNA{} 
    \item[] Justification: The paper does not involve research with human subjects.
    \item[] Guidelines:
    \begin{itemize}
        \item The answer \answerNA{} means that the paper does not involve crowdsourcing nor research with human subjects.
        \item Depending on the country in which research is conducted, IRB approval (or equivalent) may be required for any human subjects research. If you obtained IRB approval, you should clearly state this in the paper. 
        \item We recognize that the procedures for this may vary significantly between institutions and locations, and we expect authors to adhere to the NeurIPS Code of Ethics and the guidelines for their institution. 
        \item For initial submissions, do not include any information that would break anonymity (if applicable), such as the institution conducting the review.
    \end{itemize}

\item {\bf Declaration of LLM usage}
    \item[] Question: Does the paper describe the usage of LLMs if it is an important, original, or non-standard component of the core methods in this research? Note that if the LLM is used only for writing, editing, or formatting purposes and does \emph{not} impact the core methodology, scientific rigor, or originality of the research, declaration is not required.
    \item[] Answer: \answerYes{}{} 
    \item[] Justification: We use GPT-4o mini as an automatic fluency judge for generated reasoning traces. The core experiments also use LLMs as the defended teacher, proxy models, and downstream student models, as described in Section~\ref{sec:experiments} and Appendix~\ref{app:hyperparameters}.
    \item[] Guidelines:
    \begin{itemize}
        \item The answer \answerNA{} means that the core method development in this research does not involve LLMs as any important, original, or non-standard components.
        \item Please refer to our LLM policy in the NeurIPS handbook for what should or should not be described.
    \end{itemize}

\end{enumerate}

\end{document}